\newcommand{\R}{\mathbf{R}}
\newtheorem{proposition}{Proposition}
\theoremstyle{definition}
\title{Flow Network based Generative Models for Non-Iterative Diverse Candidate Generation}
\author{
  Emmanuel Bengio$^{1,2}$, Moksh Jain$^{1,5}$, Maksym Korablyov$^1$\\
  \textbf{Doina Precup$^{1,2,4}$, Yoshua Bengio$^{1,3}$}\\
  $^1$Mila, $^2$McGill University, $^3$Université de Montréal, $^4$DeepMind, $^5$Microsoft \\
}
\begin{document}

\maketitle

\begin{abstract}
  This paper is about the problem of learning a stochastic policy for generating an object (like a molecular graph) from a sequence of actions, such that the probability of generating an object is proportional to a given positive reward for that object. Whereas standard return maximization tends to converge to a single return-maximizing sequence, there are cases where we would like to sample a diverse set of high-return solutions. These arise, for example, in black-box function optimization when few rounds are possible, each with large batches of queries, where the batches should be diverse, e.g., in the design of new molecules. One can also see this as a problem of approximately converting an energy function to a generative distribution. While MCMC methods can achieve that, they are expensive and generally only perform local exploration. Instead, training a generative policy amortizes the cost of search during training and yields to fast generation.  Using insights from Temporal Difference learning, we propose GFlowNet, based on a view of the generative process as a flow network, making it possible to handle the tricky case where different trajectories can yield the same final state, e.g., there are many ways to sequentially add atoms to generate some molecular graph. We cast the set of trajectories as a flow and convert the flow consistency equations into a learning objective, akin to the casting of the Bellman equations into Temporal Difference methods. We prove that any global minimum of the proposed objectives yields a policy which samples from the desired distribution, and demonstrate the improved performance and diversity of GFlowNet on a simple domain where there are many modes to the reward function, and on a molecule synthesis task.
\end{abstract}

\section{Introduction}

The maximization of expected return $R$ in reinforcement learning (RL) is generally achieved by putting all the probability mass of the policy $\pi$ on the highest-return sequence of actions. 
In this paper, we study the scenario where our objective is not to generate the single highest-reward sequence of actions but rather to sample a distribution of trajectories whose probability is proportional to a given positive return or reward function. This can be useful in tasks where exploration is important, i.e., we want to sample from the leading modes of the return function. This is equivalent to the problem of turning an energy function into a corresponding generative model, where the object to be generated is obtained via a sequence of actions. By changing the temperature of the energy function (i.e., scaling it multiplicatively) or by taking the power of the return, one can control how selective the generator should be, i.e., only generate from around the highest modes at low temperature or explore more with a higher temperature. 

A motivating application for this setup is iterative black-box optimization where the learner has access to an oracle which can compute a reward for a large batch of candidates at each round, e.g., in drug-discovery applications. Diversity of the generated candidates is particularly important when the oracle is itself uncertain, e.g., it may consist of cellular assays which is a cheap proxy for clinical trials, or it may consist of the result of a docking simulation (estimating how well a candidate small molecule binds to a target protein) which is a proxy for more accurate but more expensive downstream evaluations (like cellular assays or in-vivo assays in mice). 

When calling the oracle is expensive (e.g. it involves a biological experiment), a standard way~\citep{angermueller-iclr2020} to apply machine learning in such exploration settings is to take the data already collected from the oracle (say a set of $(x,y)$ pairs where $x$ is a candidate solution an $y$ is a scalar evaluation of $x$ from the oracle) and train a supervised proxy $f$ (viewed as a simulator)
which predicts $y$ from $x$. The function $f$
or a variant of $f$ which incorporates uncertainty about its value, like in Bayesian optimization~\citep{Srinivas2010GaussianPO,negoescu2011knowledge}, can then be used as a reward function $R$ to train a generative model or a policy that will produce a batch of candidates for the next experimental assays. Searching for $x$ which maximizes $R(x)$ is not sufficient because we would like to sample for the batch of queries a representative set of $x$'s with high values of $R$, i.e., around modes of $R(x)$. Note that alternative ways to obtain diversity exist, e.g., with batch Bayesian optimization~\citep{kirsch2019batchbald}. An advantage of the proposed approach is that the computational cost is linear in the size of the batch (by opposition with methods which compare pairs of candidates, which is at least quadratic). With the possibility of assays of a hundred thousand candidates using synthetic biology, linear scaling would be a great advantage.

{\bf In this paper, we thus focus on the specific machine learning problem of turning a given positive reward or return function into a generative policy which samples with a probability proportional to the return.} In applications like the one mentioned above, we only apply the reward function after having generated a candidate, i.e., the reward is zero except in a terminal state, and the return is the terminal reward. We are in the so-called episodic setting of RL.

The proposed approach views the probability assigned to an action given a state as the flow associated with a network whose nodes are states, and outgoing edges from that node are deterministic transitions driven by an action (not to be confused with normalizing flows;~\citet{rezende2016variational}). The total flow into the network is the sum of the rewards in the terminal states (i.e., a partition function) and can be shown to be the flow at the root node (or start state). 
The proposed algorithm is inspired by Bellman updates and converges when the incoming and outgoing flow into and out of each state match. A policy which chooses an action with probability proportional to the outgoing flow corresponding to that action is proven to achieve the desired result, i.e., the probability of sampling a terminal state is proportional to its reward. In addition, we show that the resulting setup is off-policy; it converges to the above solution even if the training trajectories come from a different policy, so long as it has large enough support on the state space.

The main contributions of this paper are as follows:
\begin{itemize}[topsep=0pt,itemsep=2pt,parsep=2pt,partopsep=2pt,leftmargin=1cm]
    \item We propose GFlowNet, a novel generative method for unnormalized probability distributions based on flow networks and local flow-matching conditions: the flow incoming to a state must match the outgoing flow.
    \item We prove crucial properties of GFlowNet, including the link between the flow-matching conditions (which many training objectives can provide) and the resulting match of the generated policy with the target reward function. We also prove its offline properties and asymptotic convergence (if the training objective can be minimized). We also demonstrate that previous related work~\citep{buesing2019approximate} which sees the generative process like a tree would fail when there are many action sequences which can lead to the same state.
    \item We demonstrate on synthetic data the usefulness of departing from seeking one mode of the return, and instead seeking to model the entire distribution and all its modes.
    \item We successfully apply GFlowNet to a large scale molecule synthesis domain, with comparative experiments against PPO and MCMC methods.
\end{itemize}

All implementations are available at \url{https://github.com/bengioe/gflownet}.

\newcommand{\amap}{\ensuremath{C}}
\newcommand{\rsum}{\ensuremath{\tilde V}} 
\newcommand{\flowf}{\ensuremath{F}} 

\section{Approximating Flow Network generative models with a TD-like objective}

Consider a discrete set $\cal X$ and policy $\pi(a|s)$
to sequentially build $x \in \cal X$
with probability $\pi(x)$ with 
\begin{equation} 
\pi(x) \approx \frac{R(x)}{Z} = \frac{R(x)}{\sum_{x'\in\cal X} R(x')}
\end{equation}
where $R(x)>0$ is a reward for a terminal state $x$. This would be useful to sample novel drug-like molecules when given a reward function $R$ that scores molecules based on their chemical properties. Being able to sample from the high modes of $R(x)$ would provide diversity in the batches of generated molecules sent to assays. This is in contrast with the typical RL objective of maximizing return which we have found to often end up focusing around one or very few good molecules. In our context, $R(x)$ is a proxy for the actual values obtained from assays, which means it can be called often and cheaply. $R(x)$ is retrained or fine-tuned each time we acquire new data from the assays.

What method should one use to generate batches sampled from $\pi(x)\propto R(x)$? Let's first think of the state space under which we would operate.

Let $\cal S$ denote the set of states and $\cal X \subset S$
denote the set of terminal states. 
Let $\cal A$ be a finite set, the alphabet, ${\cal A}(s) \subseteq {\cal A}$ be the set of allowed actions at state $s$, and let ${\cal A}^*(s)$ be the set of all sequences of actions allowed after state $s$. To every action sequence $\vec{a}=(a_1, a_2, a_3, ..., a_h)$ of $a_i \in{\cal A}, h \leq H$ corresponds a single $x$, i.e. the environment is deterministic so we can define a function $\amap$ mapping a sequence of actions $\vec{a}$ to an $x$. If such a sequence is `incomplete' we define its
reward to be 0.
When the correspondence between action sequences and states is {\bf bijective}, a state $s$ is uniquely described by some sequence $\vec{a}$, and we can visualize the generative process as the traversal of a tree from a single root node to a leaf corresponding to the sequence of actions along the way. 

However, when this correspondence is {\bf non-injective}, i.e. when multiple action sequences describe the same $x$, things get trickier. Instead of a tree, we get a directed acyclic graph or DAG (assuming that the sequences must be of finite length, i.e., there are no deterministic cycles), as illustrated in Figure~\ref{fig:flownet}. 
For example, and of interest here, molecules can be seen as graphs, which can be described in multiple orders (canonical representations such as SMILES strings also have this problem: there may be multiple descriptions for the same actual molecule). 
The standard approach to such a sampling problem is to use iterative MCMC methods \citep{xie2021mars, grathwohl2021oops}. Another option is to relax the desire to have $p(x)\propto R(x)$ and to use non-interative (sequential) RL methods \citep{gottipati2020learning}, but these are at high risk of getting stuck in local maxima and of missing modes. Indeed, in our setting, the policy which maximizes the expected return (which is the expected final reward) generates the sequence with the highest return (i.e., a single molecule).


\vspace*{-2mm}
\subsection{Flow Networks}
\vspace*{-1mm}

In this section we propose the Generative Flow Network framework, or GFlowNet, which enables us to learn policies such that $p(x)\propto R(x)$ when sampled. We first discuss why existing methods are inadequate, and then show how we can use the metaphor of flows, sinks and sources, to construct adequate policies. We then show that such policies can be learned via a flow-matching objective.

With existing methods in the bijective case, one can think of the sequential generation of one $x$ as an episode in a tree-structured deterministic MDP, where all leaves $x$ are terminal states (with reward $R(x)$) and the root is initial state $s_0$. Interestingly, in such a case one can express the pseudo-value of a state $\rsum(s)$ as the sum of all the rewards of the descendants of $s$ \citep{buesing2019approximate}.

In the non-injective case, these methods are inadequate. Constructing $\pi(\tau)\approx R(\tau)/Z$, e.g. as per~\citet{buesing2019approximate}, MaxEnt RL~\citep{haarnoja2017reinforcement}, or via an autoregressive method \citep{nash2019autoregressive,shi2021masked} has a particular problem as shown below: if multiple action sequences $\vec{a}$ (i.e. multiple trajectories $\tau$) lead to a final state $x$, then a serious bias can be introduced in the generative probabilities. 
Let us denote $\vec{a}+\vec{b}$ as
the concatenation of the two sequences of actions $\vec{a}$
and $\vec{b}$, and by extension $s + \vec{b}$
the state reached by applying the actions in $\vec{b}$
from state $s$. 

\begin{proposition}
\label{prop:1}
Let $\amap:{\cal A}^* \mapsto \cal S$ associate each allowed action sequence $\vec{a} \in {\cal A}^*$ to a state \mbox{$s=\amap(\vec{a}) \in \cal S$}.
Let $\rsum: {\cal S} \mapsto \R^+$ associate each state $s \in \cal S$ to $\rsum(s)=\sum_{\vec{b} \in {\cal A}^*(s)} R(s+\vec{b})>0$, where ${\cal A}^*(s)$ is the set of allowed continuations from $s$ and $s+\vec{b}$ denotes the resulting state, i.e., $\rsum(s)$ is the sum of the rewards of all the states reachable from $s$.
Consider a policy $\pi$ which starts from the state corresponding to the empty string
$s_0=\amap(\emptyset)$ and chooses from state $s \in \cal S$ an allowable action $a \in {\cal A}(s)$
with probability $\pi(a | s) = \frac{\rsum(s+a)}{\sum_{b \in {\cal A}(s)} \rsum(s+b)}$. Denote $\pi(\vec{a}=(a_1,\ldots,a_N))=\prod_{i=1}^N \pi(a_i|\amap(a_1, \ldots, a_{i-1}))$ and $\pi(s)$ with $s \in \cal S$ the probability of visiting a state $s$ with this policy. The following then obtains:\\
(a) $\pi(s) = \sum_{\vec{a}_i:\amap(\vec{a}_i)=s} \pi(\vec{a}_i)$.\\
(b) If $\amap$ is bijective, then $\pi(s)=\frac{\rsum(s)}{\rsum(s_0)}$ and as a special case for terminal states $x$, $\pi(x)=\frac{R(x)}{\sum_{x \in \cal X} R(x)}$.\\
(c) If $\amap$ is non-injective and there are $n(x)$ distinct action sequences $\vec{a}_i$ s.t. $\amap(\vec{a}_i)=x$, then \mbox{$\pi(x)=\frac{n(x) R(x)}{\sum_{x' \in \cal X} n(x') R(x')}$}.
\end{proposition}
See Appendix \ref{app:proofs} for the proof.
In combinatorial spaces, such as for molecules, where $\amap$ is non-injective (there are many ways to construct a molecule graph), this can become exponentially bad as trajectory lengths increase. It means that larger molecules would be exponentially more likely to be sampled than smaller ones, just because of the many more paths leading to them.
In this scenario, the pseudo-value $\rsum$ is ``misinterpreting'' the MDP's structure as a tree, leading to the wrong $\pi(x)$.

An alternative is to see the MDP as a \textbf{flow network}, that is, leverage the DAG structure of the MDP, and learn a flow $\flowf$, rather than estimating the pseudo-value $\rsum$ as a sum of descendant rewards, as elaborated below.
We define the flow network as a having a single source, the root node (or initial state) $s_0$ with in-flow $Z$, and one sink for each leaf (or terminal state) $x$ with out-flow $R(x)>0$. We write $T(s,a)=s'$ to denote that the state-action pair $(s,a)$ leads to state $s'$. Note that because $\amap$ is not a bijection, i.e., there are many paths (action sequences) leading to some node, a node can have multiple parents, i.e. $|\{(s,a) \;|\;T(s,a) = s'\}| \geq 1$, except for the root, which has no parent. We write $\flowf(s,a)$ for the flow between node $s$ and node $s'=T(s,a)$, $\flowf(s)$ for the total flow going through $s$\footnote{In some sense, $\flowf(s)$ and $\flowf(s,a)$ are close to  $V$ and $Q$, RL's value and action-value functions. These effectively inform an agent taking decisions at each step of an MDP to act in a desired way. With some work, we can also show an equivalence between $\flowf(s,a)$ and the ``real'' $Q^{\hat\pi}$ of some policy $\hat\pi$ in a modified MDP (see \ref{sec:RL-equivalence}).}\hspace{-0.33em}. This construction is illustrated in Fig. \ref{fig:flownet}.

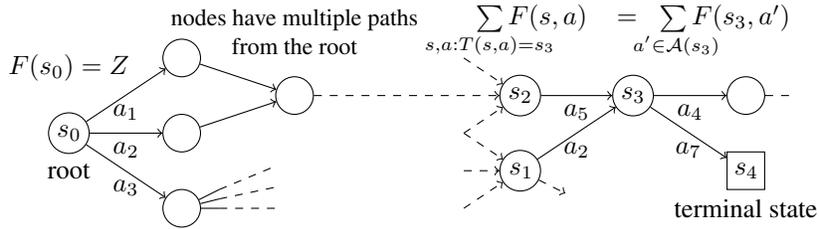
\begin{figure}[h!]
    \centering
    \begin{tikzpicture}
        
\tikzset{state/.style={draw=black, circle, minimum size=0.5cm,
  inner sep=0pt,
  text width=0.5cm, align=center}};
\tikzset{termstate/.style={state, rectangle}};
\node[state,label=below:root] (root) at (0,0) {$s_0$};
\node[above=3mm of root] {$\flowf(s_0)=Z$};
\node[state] (s01) at (1.5,1) {};
\node[state] (s02) at (1.5,0) {};
\node[state] (s03) at (1.5,-1) {};
\draw[->] (root) -> (s01) node [midway, below] {$a_1$};
\draw[->] (root) -> (s02) node [midway, below] {$a_2$};
\draw[->] (root) -> (s03) node [midway, below] {$a_3$};
\node[state] (sterm) at (3,0.5) {}; 
\node[align=center] () at (3, 1.35) {\small nodes have multiple paths\\ \small from the root};
\draw[->] (s01) -> (sterm);
\draw[->] (s02) -> (sterm);
\foreach \i in {0,0.125,0.25}{
  \draw[-] ($(s03)+(0.5,\i)$) -- ++(0.75,\i*1.25) [dashed];
  \draw[-] (s03) -- ++(0.5,\i);
}
\begin{scope}[xshift=2cm]
\node[state] (s1) at (4,-0.5) {$s_1$}; 
\foreach \i in {0,-0.5,0.5}{
  \draw[->, dashed] ($(s1)-(0.75,\i)$) -- (s1);
}

\node[state] (s12) at (4,0.5) {$s_2$}; 
\foreach \i in {-0.5,0.5}{
  \draw[->, dashed] ($(s12)-(0.75,\i)$) -- (s12);
}

\node[align=center] () at (5.5, 1.35) {$\sum\limits_{\mathclap{s,a:T(s,a)=s_3}} \flowf(s,a) \;\;\;=\;\; \sum\limits_{\mathclap{\;a'\in\mathcal{A}(s_3)}}\flowf(s_3,a')$};
\node[state] (s2) at (5.5,0.5) {$s_3$}; 
\draw[->] (s1) -- (s2) node [midway, below] {$a_2$};
\draw[->] (s12) -- (s2) node [midway, below] {$a_5$};
\draw[->, dashed] (s1) -- ++(0.6, -0.3);

\node[state] (s31) at (7,0.5) {}; 
\node[termstate, label=below:terminal state] (s32) at (7,-0.5) {$s_4$}; 
\draw[->] (s2) -- (s31)  node [midway, below] {$a_4$};
\draw[->] (s2) -- (s32)  node [midway, below] {$a_7$};
\draw[-, dashed] (s31.east) -- ++(0.4, 0); 
\end{scope}
\draw[->, dashed] (sterm) -- (s12); 

    \end{tikzpicture}
    \caption{A flow network MDP. Episodes start at source $s_0$ with flow $Z$. Like with SMILES strings, there are no cycles. Terminal states are sinks with out-flow $R(s)$. Exemplar state $s_3$ has parents $\{(s,a)|T(s,a)\!=\!s_3\} \!=\! \{(s_1, a_2), (s_2, a_5)\}$ and allowed actions $\mathcal{A}(s_3) \!=\! \{a_4, a_7\}$. $s_4$ is a terminal sink state with $R(s_4) > 0$ and only one parent. 
    The goal is to estimate $\flowf(s,a)$ such that the flow equations are satisfied for all states: for each node, incoming flow equals outgoing flow.}
    \label{fig:flownet}
\end{figure}

To satisfy flow conditions, we require that for any node, the incoming flow equals the outgoing flow, which is the total flow $\flowf(s)$ of node $s$. Boundary conditions are given by the flow into the terminal nodes $x$, $R(x)$.
Formally, for any node $s'$, we must have that the in-flow
\begin{equation}
    \flowf(s') = \sum_{s,a:T(s,a)=s'} \flowf(s,a)
\end{equation}
equals the out-flow
\begin{equation}
\flowf(s') =  \sum_{a'\in {\cal A}(s')} \flowf(s',a').
\end{equation}
More concisely, with $R(s)=0$ for interior nodes, and  $\mathcal{A}(s)=\varnothing$ for leaf (sink/terminal) nodes, we write the following \textit{flow consistency equations}:
\begin{align}
    \sum_{s,a:T(s,a)=s'} \flowf(s,a) =  R(s') + \sum_{a'\in {\cal A}(s')} \flowf(s',a'). \label{eq:in-out-flow-q-eq}
\end{align}

with $\flowf$ being a flow, $\flowf(s,a)>0 \;\forall s,a$ (for this we needed to constrain $R(x)$ to be positive too). One could include in principle nodes and edges with zero flow but it would make it difficult to talk about the logarithm of the flow, as we do below, 
and such states can always be excluded by the allowed
set of actions for their parent states.
Let us now show that such a flow correctly produces $\pi(x) = R(x)/Z$ when the above flow equations are satisfied. %
\begin{proposition}
\label{prop:2}
Let us define a policy $\pi$ that generates trajectories starting in state $s_0$ by sampling actions $a \in {\cal A}(s)$ according to
\begin{align}
    \pi(a|s) = \frac{\flowf(s,a)}{\flowf(s)}
\label{eq:def-pi}
\end{align}
where $\flowf(s,a)>0$ is the flow through allowed edge $(s,a)$,
$\flowf(s)=R(s) + \sum_{a\in {\cal A}(s)} \flowf(s,a)$ where
$R(s)=0$ for non-terminal nodes $s$ and $\flowf(x)=R(x)>0$ for terminal nodes $x$,
and the flow consistency equation \mbox{$\sum_{s,a:T(s,a)=s'} \flowf(s,a)=R(s') + \sum_{a'\in {\cal A}(s')} \flowf(s',a')$} is satisfied.
Let $\pi(s)$ denote the probability of visiting state $s$
when starting at $s_0$ and following $\pi(\cdot|\cdot)$.
Then \\
(a) \mbox{$\pi(s)=\frac{\flowf(s)}{\flowf(s_0)}$}\\ 
(b) $\flowf(s_0)=\sum_{x \in \cal X} R(x)$\\
(c) \mbox{$\pi(x)=\frac{R(x)}{\sum_{x' \in {\cal X}} R(x')}$}.
\end{proposition}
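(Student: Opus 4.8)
The plan is to establish part (a) first, since it carries the analytic weight, then obtain (b) as a normalization consequence and read off (c) by combining the two. The identity I would build everything on is the one-step recursion for the state-visiting probability. Because the environment is a finite acyclic graph, every trajectory is a simple path that reaches each state at most once, and whenever the path visits a non-source state $s'$ its immediately preceding edge is a unique parent edge $(s,a)$ with $T(s,a)=s'$. The events ``visit $s'$ via edge $(s,a)$'' are therefore mutually exclusive and exhaust ``visit $s'$'', and each has probability $\pi(s)\,\pi(a|s)$, giving
\begin{equation}
\pi(s') = \sum_{s,a:\,T(s,a)=s'} \pi(s)\,\pi(a|s).
\end{equation}
Substituting the definition $\pi(a|s)=Q(s,a)/V(s)$ rewrites the right-hand side as $\sum_{s,a:\,T(s,a)=s'} \pi(s)\,Q(s,a)/V(s)$, which is exactly the form the induction will collapse.

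For part (a) I would induct along a topological ordering of the DAG (which exists since the graph is finite and acyclic). The base case is $\pi(s_0)=1=V(s_0)/V(s_0)$, using that the root has no parent. For the inductive step, assuming $\pi(s)=V(s)/V(s_0)$ for every parent $s$ of $s'$, the $V(s)$ factors cancel and the recursion reduces to
\begin{equation}
\pi(s') = \frac{1}{V(s_0)} \sum_{s,a:\,T(s,a)=s'} Q(s,a) = \frac{V(s')}{V(s_0)},
\end{equation}
where the final equality is precisely the flow-matching hypothesis stating that the in-flow to $s'$ equals $V(s')$. This single line is where the flow equation is used; everything else is bookkeeping.

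Part (b) I would get from normalization. Since the graph is finite and acyclic and every non-terminal state has at least one outgoing edge of strictly positive flow, the policy reaches a terminal state in finitely many steps with probability one, and each trajectory terminates at exactly one $x\in\mathcal{X}$, so $\sum_{x\in\mathcal{X}} \pi(x)=1$. Applying (a) at terminal states, where $V(x)=R(x)$, gives $\pi(x)=R(x)/V(s_0)$, and summing forces $V(s_0)=\sum_{x\in\mathcal{X}} R(x)$. Part (c) is then immediate: inserting this value into $\pi(x)=V(x)/V(s_0)=R(x)/V(s_0)$ yields $\pi(x)=R(x)/\sum_{x'\in\mathcal{X}} R(x')$.

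The main obstacle is not the algebra but justifying the structural facts cleanly: that the DAG admits a topological order so the induction is well-founded, that the visiting probabilities genuinely obey the additive one-step recursion (which rests on trajectories being simple paths with no revisits, so that each state has a unique preceding edge and the parent-arrival events are disjoint), and that termination occurs almost surely so the normalization in (b) is legitimate. Once these are in place, the flow-matching condition does all the real work in the one line of (a), and (b) and (c) follow mechanically.
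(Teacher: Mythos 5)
Your proposal is correct and follows essentially the same route as the paper's own proof: the one-step recursion for $\pi(s')$ over parent edges, an induction establishing $\pi(s)=V(s)/V(s_0)$ in which the flow-matching equation collapses the sum, and normalization over terminal states to identify $V(s_0)=\sum_{x}R(x)$. Your added care about the topological ordering, the disjointness of the parent-arrival events, and almost-sure termination makes the induction and normalization steps more explicit than in the paper, but the argument is the same.
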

\begin{proof}
We have $\pi(s_0)=1$ since we always
start in root node $s_0$. Note that $\sum_{x \in \cal X} \pi(x)=1$ because terminal states are mutually exclusive,
but in the case of non-bijective $\amap$, we cannot say
that $\sum_{s \in \cal S} \pi(s)$ equals 1 because
the different states are not mutually exclusive in general.
This notation is different from the one typically
used in RL where $\pi(s)$ refers to the asymptotic
distribution of the Markov chain.
Then 
\begin{equation}
    \pi(s') = \sum_{(a,s):T(s,a)=s'} \pi(a | s) \pi(s)
\end{equation}
i.e., using Eq.~\ref{eq:def-pi},
\begin{equation}
  \pi(s') = \sum_{(a,s):T(s,a)=s'} \frac{\flowf(s,a)}{\flowf(s)} \pi(s).
\end{equation}
We can now conjecture that the statement 
\begin{align}
 \pi(s) = \frac{\flowf(s)}{\flowf(s_0)}   
\end{align}
 is true and prove it by induction. This is trivially true for the root, which is our base statement, since $\pi(s_0)=1$. By induction, we then have that if the statement is true for parents $s$ of $s'$, then
\begin{align}
\pi(s') &= \sum_{s,a:T(s,a)=s'}\frac{\flowf(s,a)}{\flowf(s)}\frac{\flowf(s)}{\flowf(s_0)} = \frac{\sum_{s,a:T(s,a)=s'}\flowf(s,a)}{\flowf(s_0)} = \frac{\flowf(s')}{\flowf(s_0)}
\end{align}

which proves the statement, i.e., the first conclusion (a) of the theorem. We can then apply it to the case of terminal states $x$, whose flow is fixed to $\flowf(x)=R(x)$
and obtain 
\begin{align}
\label{eq:final-pi}
\pi(x)=\frac{R(x)}{\flowf(s_0)}.
\end{align}
Noting that $\sum_{x \in X} \pi(x)=1$ and summing both sides
of Eq.~\ref{eq:final-pi}
over $x$ we thus obtain (b), i.e.,
$\flowf(s_0)=\sum_{x \in \cal X} R(x)$. Plugging this back
into Eq.~\ref{eq:final-pi}, we obtain (c), i.e.,
$\pi(x)=\frac{R(x)}{\sum_{x' \in \cal X} R(x')}$.
\end{proof}
Thus our choice of $\pi(a|s)$ satisfies our desiderata:
it maps a reward function $R$
to a generative model which generates $x$ with
probability $\pi(x) \propto R(x)$, whether $\amap$ is bijective or non-injective
(the former being a special case of the latter, and
we just provided a proof for the general non-injective case).

\vspace*{-2mm}
\subsection{Objective Functions for GFlowNet}
\vspace*{-1mm}
We can now leverage our RL intuitions to create a learning algorithm out of the above theoretical results. In particular, we propose to approximate the flows $\flowf$ such that the flow consistency equations are respected at convergence with enough capacity in our estimator of $\flowf$, just like the Bellman equations for 
temporal-difference (TD) algorithms~\citep{sutton2018reinforcement}.
This could yield the following objective for
a trajectory $\tau$:
\begin{align}
\mathcal{\tilde{L}}_\theta(\tau) = \sum_{s'\in\tau\neq s_0}\left(\smashoperator[r]{\sum_{s,a:T(s,a)=s'}} \flowf_\theta(s,a) -  R(s') - \smashoperator{\sum_{a'\in {\cal A}(s')}} \flowf_\theta(s',a')\right)^{2^{\vphantom{X}}}
.
\label{eq:flow-loss}
\end{align}
One issue from a learning point of view is that the flow will be very large for nodes near the root (early in the trajectory) and tiny for nodes near the leaves (late in the trajectory). 
In high-dimensional spaces where the cardinality of $\cal X$ is exponential (e.g., in the typical number of actions to form an $x$), 
the $\flowf(s,a)$ and $\flowf(s)$ for early states will be exponentially larger than for later states. Since we want $\flowf(s,a)$
to be the output of a neural network, this would lead to serious numerical issues.

To avoid this problem, we define the flow matching objective on a  log-scale, where we match not the incoming and outgoing flows but their logarithms, and we train our predictor to estimate \mbox{$\flowf^{\log}_\theta(s,a)= \log \flowf(s,a)$}, and exponentiate-sum-log the $\flowf^{\log}_\theta$ predictions to compute the loss, yielding the square of a difference of logs:
\begin{equation}
    \mathcal{L}_{\theta,\epsilon}(\tau) = \sum_{\mathclap{s'\in\tau\neq s_0}}\,\left(\log\! \left[\epsilon+\smashoperator[r]{\sum_{\mathclap{s,a:T(s,a)=s'}}} \exp \flowf^{\log}_\theta(s,a)\right] - \log\! \left[ \epsilon + R(s') + \sum_{\mathclap{a'\in {\cal A}(s')}} \exp \flowf^{\log}_\theta(s',a')\right]\right)^2 \label{eq:flow-loss-log}
\end{equation}
which gives equal gradient weighing to large and small magnitude predictions. %
Note that matching the logs of the flows is equivalent to making the ratio of the incoming and outgoing flow closer to 1. To give more weight to errors on large flows and avoid taking the logarithm of a tiny number, we compare $\log(\epsilon+$incoming flow$)$ with $\log(\epsilon+$outgoing flow$)$. It does not change the global minimum, which is still when the flow equations are satisfied, but it avoids numerical issues with taking the log of a tiny flow. The hyper-parameter $\epsilon$ also trades-off how much pressure we put on matching large versus small flows, and in our experiments is set to be close to the smallest value $R$ can take. Since we want to discover the top modes of $R$, it makes sense to care more for the larger flows. Many other objectives are possible for which flow matching is also a global minimum.

An interesting advantage of such objective functions is that they yield off-policy offline methods. The predicted flows $\flowf$ do not depend on the
policy used to sample trajectories (apart from the fact that the samples should sufficiently cover the space of trajectories in order to obtain generalization). This is formalized below, which shows that we can use any broad-support policy to sample training trajectories and still obtain the correct flows
and generative model, i.e., training can be off-policy.
\begin{proposition}
Let trajectories $\tau$ used to train $\flowf_\theta$ 
be sampled from
an exploratory policy $P$ with the same support as 
the optimal $\pi$ defined in Eq.~\ref{eq:def-pi}
for a consistent flow $\flowf^* \in \mathcal{F}^*$. A flow is consistent if Eq.~\ref{eq:in-out-flow-q-eq} is respected. Also assume
that $\exists \theta: \flowf_{\theta} = \flowf^*$, i.e., we choose
a sufficiently rich family of predictors.
Let $\theta^* \in {\rm argmin}_\theta E_{P(\tau)}[ L_\theta(\tau) ]$
a minimizer of the expected training loss.
Let $L_\theta(\tau)$ have the property that
when flows are matched it achieves its lowest
possible value. First, it can be shown
that this property is satisfied for the loss
in Eq.~\ref{eq:flow-loss-log}.
Then
\vspace*{-1mm}
\begin{align}
    \flowf_{\theta^*} &= \flowf^*, \;\;\;{\rm and}\;\;\;
    L_{\theta^*}(\tau) =0 \;\;\; \forall \tau \sim P(\theta),
\end{align}
\vspace*{-6mm}

i.e., a global optimum of the expected loss provides
the correct flows. 
If \mbox{$\pi_{\theta^*}(a|s) = \frac{\flowf_{\theta^*}(s,a)}{\sum_{a' \in {\cal A}(s)} \flowf_{\theta^*}(s,a')}$}
then we also have
\vspace*{-2mm}
\begin{align}
 \pi_{\theta^*}(x)=\frac{R(x)}{Z}.
\end{align}
\end{proposition}
\vspace*{-2mm}
The proof is in Appendix \ref{app:proofs}.
Note that, in RL terms, this method is akin to asynchronous dynamic programming \citep[\S 4.5]{sutton2018reinforcement}, which is an off-policy off-line method which converges provided every state is visited infinitely many times asymptotically. 

\vspace*{-2mm}
\section{Related Work}
\vspace*{-1.5mm}

The objective of training a policy generating states with a probability proportional to rewards was presented by \citet{buesing2019approximate} but the proposed method only makes sense when there is a bijection between action sequences and states. In contrast, GFlowNet is applicable in the more general setting where many paths can lead to the same state. The objective to sample with probability proportional to a given unnormalized positive function is achieved by many MCMC methods~\citep{grathwohl2021oops, dai2020learning}. However, when mixing between modes is challenging (e.g., in high-dimensional spaces with well-separated modes occupying a fraction of the total volume) convergence to the target distribution can be extremely slow. In contrast, GFlowNet is not iterative and amortizes the challenge of sampling from such modes through a training procedure which must be sufficiently exploratory. 

This sampling problem comes up in
molecule generation and has been studied in this context
with numerous generative models~\citep{shi2020graphaf, Jin_2020,luo2021graphdf}, MCMC methods~\citep{seff2019discrete,xie2021mars}, RL~\citep{segler2017generating,decao2018molgan,popova2019molecularrnn,gottipati2020learning,angermueller-iclr2020} and evolutionary methods~\citep{brown2004graph,jensen2019graph,swersky2020amortized}.
Some of these methods rely on a given set of
"positive examples" (high-reward) to train a generative model, thus not taking advantage of the "negative examples" and the continuous nature of the measurements (some examples should be generated more often than others). Others rely on the traditional return maximization objectives of RL, which tends to focus
on one or a few dominant modes, as we find in our
experiments. Beyond molecules, there are previous works generating data non-greedily through RL~\citep{bachman2015data} or energy-based GANs~\citep{dai2017calibrating}.

The objective that we formulate in \eqref{eq:flow-loss-log} may remind the reader of the objective of control-as-inference's Soft Q-Learning~\citep{haarnoja2017reinforcement}, with the difference that we include \emph{all} the parents of a state in the in-flow, whereas Soft Q-Learning only uses the parent contained in the trajectory. Soft Q-Learning induces a different policy, as shown by Proposition \ref{prop:1}, one where $P(\tau)\propto R(\tau)$ rather than $P(x) \propto R(x)$. More generally, we only consider deterministic generative settings whereas RL is a more general framework for stochastic environments.

Literature at the intersection of network flow and deep learning is sparse, and is mostly concerned with solving maximum flow problems \citep{nazemi2012capable,chen2020learning} or classification within existing flow networks \citep{rahul2017deep, pektacs2019deep}. Finally, the idea of accounting for the search space being a DAG rather than a tree in MCTS, known as transpositions~\citep{childs2008transpositions}, also has some links with the proposed method.


\vspace*{-3mm}
\section{Empirical Results}
\vspace*{-1.5mm}

We first verify that GFlowNet works as advertised on an artificial domain small enough to compute the partition function exactly, and compare its abilities to recover modes compared to standard MCMC and RL methods, with its sampling distribution better matching the normalized reward. We find that GFlowNet (A) converges to $\pi(x)\propto R(x)$, (B) requires less samples to achieve some level of performance than MCMC and PPO methods and (C) recovers all the modes and does so faster than MCMC and PPO, both in terms of wall-time and number of states visited and queried.
We then test GFlowNet on a large scale domain, which consists in generating small drug molecule graphs, with a reward that estimates their binding affinity to a target protein (see Appendix \ref{app:chemistry}). We find that GFlowNet finds higher reward and more diverse molecules faster than baselines.

\vspace*{-1.5mm}
\subsection{A (hyper-)grid domain}
\label{exp:single_grid}
\vspace*{-1mm}
Consider an MDP where states are the cells of a $n$-dimensional hypercubic grid of side length $H$. The agent starts at coordinate $x=(0,0,...)$ and is only allowed to increase coordinate $i$ with action $a_i$ (up to $H$, upon which the episode terminates). A \emph{stop} action indicates to terminate the trajectory. There are many action sequences that lead to the same coordinate, making this MDP a DAG.
The reward for ending the trajectory in $x$ is some $R(x)>0$. For MCMC methods, in order to have an ergodic chain, we allow the iteration to decrease coordinates as well, and there is no \emph{stop} action.

We ran experiments with this reward function: 
\begin{equation*}
R(x) = R_0 + 
R_1  \textstyle \prod_i \mathbb{I}(0.25<|x_i/H-0.5|) + 
R_2  \textstyle \prod_i  \mathbb{I}(0.3<|x_i/H-0.5|<0.4) \notag
\stackon[14pt]{\hspace{2cm} \;}
{\begin{tikzpicture}[overlay]
\node (0,0) {};
\node (0,1) {\includegraphics[width=2.3cm]{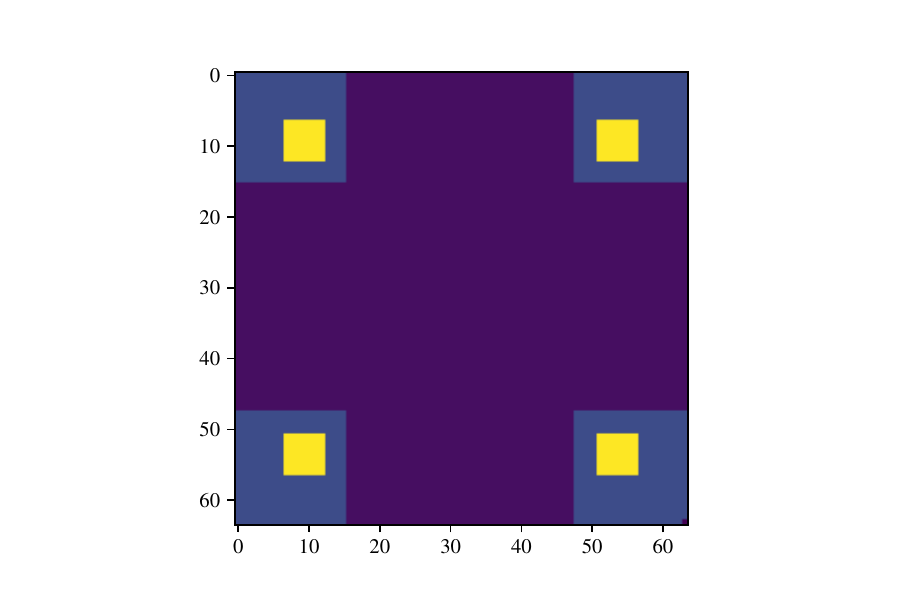}};
\end{tikzpicture}}
\end{equation*}
with $0<R_0\ll R_1<R_2$, pictured when $n=2$ on the right. For this choice of $R$, there are only interesting rewards near the corners of the grid, and there are exactly $2^n$ modes. We set $R_1=1/2,\;R_2=2$. By varying $R_0$ and setting it closer to 0, we make this problem artificially harder, creating a region of the state space which it is undesirable to explore. To measure the performance of a method, we measure the empirical L1 error $\mathbb{E}[|p(x) - \pi(x)|]$. $p(x)=\sfrac{R(x)}{Z}$ is known in this domain, and $\pi$ is estimated by repeated sampling and counting frequencies for each possible $x$. We also measure the number of modes with at least 1 visit as a function of the number of states visited.

We run the above experiment for $R_0\in\{10^{-1}, 10^{-2}, 10^{-3}\}$ with $n=4$, $H=8$. In Fig. \ref{fig:corners_R0} we see that GFlowNet is robust to $R_0$ and obtains a low L1 error, while a Metropolis-Hastings-MCMC based method requires exponentially more samples than GFlowNet to achieve some level of L1 error. This is apparent in Fig. \ref{fig:corners_R0} (with a log-scale horizontal axis) by comparing the slope of progress of GFlowNet (beyond the initial stage) and that of the MCMC sampler. 
We also see that MCMC takes much longer to visit each mode \emph{once} as $R_0$ decreases, while GFlowNet is only slightly affected, with GFlowNet converging to some level of L1 error faster, as per hypothesis (B). This suggests that GFlowNet is robust to the separation between modes (represented by $R_0$ being smaller) and thus recovers all the modes much faster than MCMC (again, noting the log-scale of the horizontal axis).

To compare to RL, we run PPO~\citep{schulman2017proximal}. To discover all the modes in a reasonable time, we need to set the entropy maximization term much higher ($0.5$) than usual ($\ll 1$). 
We verify that PPO is not overly regularized by comparing it to a random agent. PPO finds all the modes faster than uniform sampling, but much more slowly than GFlowNet, and is also robust to the choice of $R_0$. This and the previous result validates hypothesis (C). We also run SAC~\citep{haarnoja2018soft}, finding similar or worse results. We provide additional results and discussion in Appendix \ref{app:toy}.

\begin{figure}[h]
    \centering
    \vspace*{-2mm}
    \makebox[0.9\textwidth][c]{\includegraphics[width=0.95\linewidth]{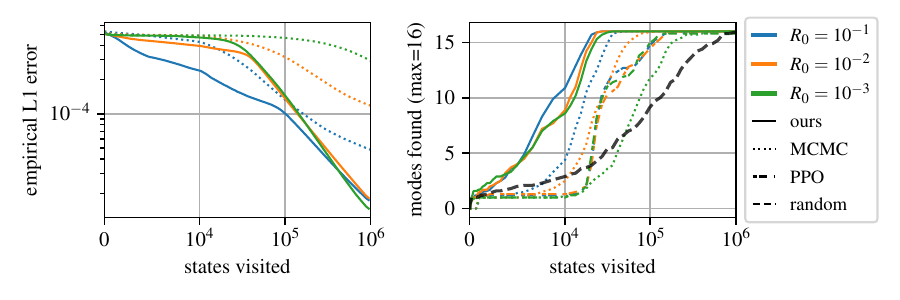}}
    \vspace*{-3mm}
    \caption{Hypergrid domain. Changing the task difficulty $R_0$ to illustrate the advantage of 
    GFlowNet over others. We see that as $R_0$ gets smaller, MCMC struggles to fit the distribution because it struggles to visit all the modes. PPO also struggles to find all the modes, and requires very large entropy regularization, but is robust to the choice of $R_0$. We plot means over 10 runs for each setting.
    }
    \vspace*{-3mm}
    \label{fig:corners_R0}
\end{figure}

\vspace*{-1.5mm}
\subsection{Generating small molecules}
\vspace*{-1mm}
Here our goal is to generate a diverse set of small molecules that have a high reward. We define a large-scale environment which allows an agent to sequentially generate molecules. This environment is challenging, with up to $10^{16}$ states and between $100$ and $2000$ actions depending on the state.

We follow the framework of \citet{Jin_2020} and generate molecules by parts using a predefined vocabulary of building blocks that can be joined together forming a \emph{junction tree} (detailed in \ref{app:chemistry}). This is also known as fragment-based drug design \citep{kumar2012fragment,xie2021mars}. Generating such a graph can be described as a sequence of additive edits: given a molecule and constraints of chemical validity, we choose an atom to attach a block to. The action space is thus the product of choosing where to attach a block and choosing which block to attach. There is an extra action to stop the editing sequence.
This sequence of edits yields a DAG MDP, as there are multiple action sequences that lead to the same molecule graph, 
and no edge removal actions, which prevents cycles.

The reward is computed with a pretrained \emph{proxy} model  that predicts the binding energy of a molecule to a particular protein target (soluble epoxide hydrolase, sEH, see \ref{app:chemistry}). Although computing binding energy is computationally expensive, we can call this proxy cheaply. Note that for realistic drug design, we would need to consider many more quantities such as drug-likeness \citep{bickerton2012quantifying}, toxicity, or synthesizability. Our goal here is not solve this problem, and our work situates itself within such a larger project. Instead, we want to show that given a proxy $R$ in the space of molecules, we can quickly match its induced distribution $\pi(x)\propto R(x)$ and find many of its modes.

We parameterize the proxy with an MPNN \citep{gilmer2017neural} over the atom graph. 
Our flow predictor $\flowf_\theta$ is parameterized similarly to MARS \citep{xie2021mars}, with an MPNN, but over the junction tree graph (the graph of blocks), which had better performance. 
For fairness, this architecture is used for both GFlowNet and the baselines. Complete details can be found in Appendix \ref{app:impl-details}.

We pretrain the proxy with a semi-curated semi-random dataset of 300k molecules (see \ref{app:impl-details}) down to a test MSE of 0.6; molecules are scored according to the docking score~\citep{trott2010autodock}, renormalized so that most scores fall between 0 and 10 (to have $R(x)>0$). We plot the dataset's reward distribution in Fig. \ref{fig:histo-beta}.
We train all generative models with up to $10^6$ molecules. During training, sampling follows exploratory policy $P(a|s)$ which is a mixture between $\pi(a|s)$ (Eq. \ref{eq:def-pi}), used with probability 0.95, and a uniform distribution over allowed actions with probability 0.05. 

\begin{figure}[h!]
\vspace*{-1mm}
    \centering
    \begin{minipage}[t]{.48\textwidth}
      \centering
      \includegraphics[width=0.99\linewidth]{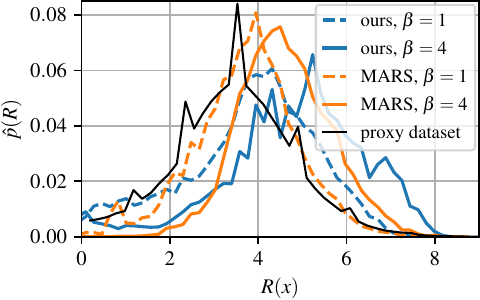}
      \captionof{figure}{Empirical density of rewards. We verify that GFlowNet is consistent by training it with $R^\beta$, $\beta=4$, which has the hypothesized effect of shifting the density to the right.}
      \label{fig:histo-beta}
    \end{minipage}%
    \hspace{0.03\textwidth}
    \begin{minipage}[t]{.48\textwidth}
      \centering
      \includegraphics[width=0.99\linewidth]{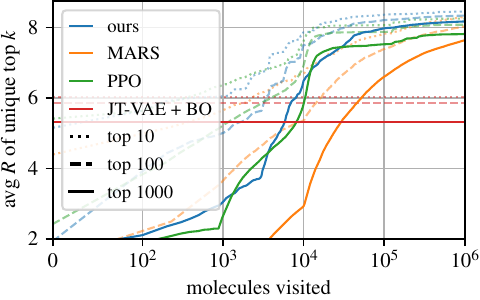}
      \captionof{figure}{The average reward of the top-$k$ as a function of learning (averaged over 3 runs). Only unique hits are counted. Note the log scale. Our method finds more unique good molecules faster.}
      \label{fig:top-k}
    \end{minipage}%
    \vspace*{-3mm}
\end{figure}

\textbf{Experimental results} $\,$ 
In Fig. \ref{fig:histo-beta} we show the empirical distribution of rewards in two settings; first when we train our model with $R(x)$, then with $R(x)^\beta$. If GFlowNet learns a reasonable policy $\pi$, this should shift the distribution to the right. This is indeed what we observe. We compare GFlowNet to MARS~\citep{xie2021mars}, known to work well in the molecule domain, and observe the same shift. Note that GFlowNet finds more high reward molecules than MARS with these $\beta$ values; this is consistent with the hypothesis that it finds high-reward modes faster (since MARS is an MCMC method, it would eventually converge to the same distribution, but takes more time).

In Fig. \ref{fig:top-k}, we show the average reward of the top-$k$ molecules found so far, without allowing for duplicates (
based on SMILES). We compare GFlowNet with MARS, PPO, and JT-VAE~\citep{Jin_2020} with Bayesian Optimization. As expected, PPO plateaus after a while; RL tends to be satisfied with good enough trajectories unless it is strongly regularized with exploration mechanisms. For GFlowNet and for MARS, the more molecules are visited, the better they become, with a slow convergence towards the proxy's max reward. Given the same compute time, JT-VAE+BO generates only about $10^3$ molecules (due to its expensive Gaussian Process) and so does not perform well.

The maximum reward in the proxy's dataset is 10, with only 233 examples above 8. In our best run, we find 2339 unique molecules during training with a score above 8, only 39 of which are in the dataset. We compute the average pairwise Tanimoto similarity for the top 1000 samples: GFlowNet has a mean of $0.44\pm 0.01$, PPO, $0.62\pm0.03$, and MARS, $0.59\pm 0.02$ (mean and std over 3 runs). As expected, our MCMC baseline (MARS) and RL baseline (PPO) find less diverse candidates. 
We also find that GFlowNet discovers \textbf{many} more modes ($>\!1500$ with $R\!>\!8$ vs $<\!100$ for MARS). This is shown in Fig. \ref{fig:mol-modes-scaff} where we consider a mode to be a Bemis-Murcko scaffold~\citep{bemis1996properties}, counted for molecules above a certain reward threshold. 
We provide additional insights into how GFlowNet matches the rewards in Appendix \ref{app:flownet-more-results}.

\begin{figure}[h]
\vspace{-2mm}
    \centering
    \includegraphics[width=\linewidth]{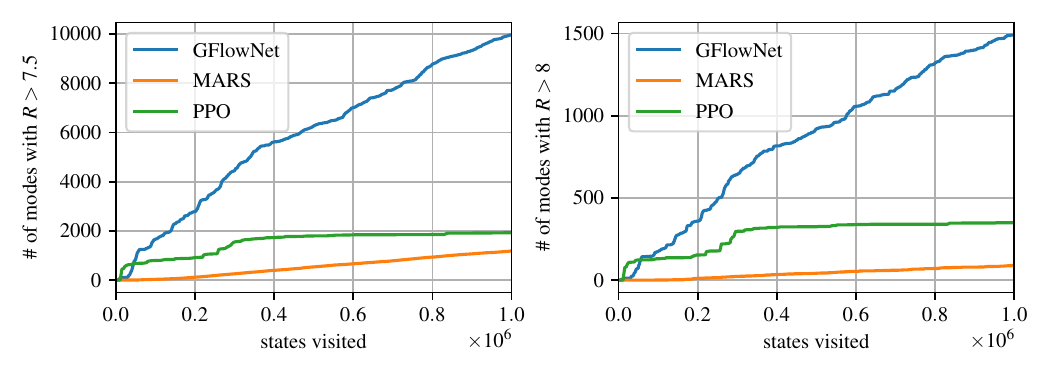}
    \caption{Number of diverse Bemis-Murcko scaffolds found above reward threshold $T$ as a function of the number of molecules seen. Left, $T=7.5$. Right, $T=8$. 
    }
    \label{fig:mol-modes-scaff}
\vspace{-2mm}
\end{figure}

\vspace*{-2mm}
\subsection{Multi-Round Experiments}
\vspace*{-1.5mm}
To demonstrate the importance of diverse candidate generation in an active learning setting, we consider a sequential acquisition task. We simulate the setting where there is a limited budget for calls to the true oracle $O$. We use a proxy $M$ initialized by training on a limited dataset of $(x, R(x))$ pairs $D_0$, where $R(x)$ is the true reward from the oracle. The generative model ($\pi_{\theta}$) is trained to fit to the unnormalized probability function learned by the proxy $M$. We then sample a batch $B=\{x_1, x_2, \dots x_k\}$ where $x_i\sim \pi_{\theta}$, which is evaluated with the oracle $O$. The proxy $M$ is updated with this newly acquired and labeled batch, and the process is repeated for $N$ iterations. We discuss the experimental setting in more detail in  Appendix~\ref{app:multi-round}. 

\begin{figure}[h!]
    \centering
    \vspace*{-3mm}
    \begin{minipage}[t]{.48\textwidth}
    \centering
      \includegraphics[width=\textwidth]{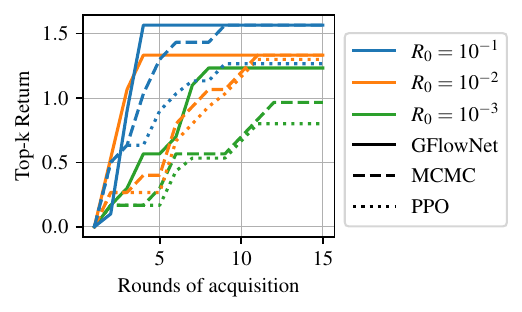}
      \vspace*{-4mm}
      \captionof{figure}{The top-k return (mean over 3 runs) in the 4-D Hyper-grid task with active learning. GFlowNet gets the highest return faster.}
      \label{fig:al_toy}
    \end{minipage}%
    \hspace{0.03\textwidth}
    \begin{minipage}[t]{.48\textwidth}
      \centering
      \includegraphics[width=\textwidth]{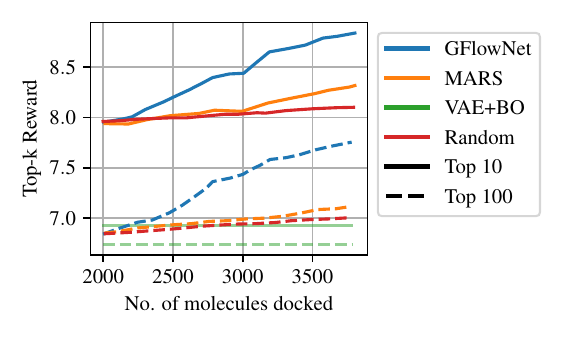}
      \vspace*{-4mm}
      \captionof{figure}{The top-k docking reward (mean over 3 runs) in the molecule task with active learning. GFlowNet consistently generates better samples.}
      \label{fig:top-k-active}
    \end{minipage}%
    \vspace*{-1mm}
\end{figure}
\mathchardef\mhyphen="2D
\textbf{Hyper-grid domain} $\,$
We present results for the multi-round task in the 4-D hyper-grid domain in Figure~\ref{fig:al_toy}. We use a Gaussian Process~\citep{Williams1995GaussianPF} as the proxy. We compare the {\em Top-k Return} for all the methods, which is defined as $\mathrm{mean}(\mathrm{top}\,\mhyphen k(D_i)) - \mathrm{mean}(\mathrm{top}\,\mhyphen k(D_{i-1}))$, where $D_i$ is the dataset of points acquired until step $i$, and $k=10$ for this experiment. The initial dataset $D_0$ ($|D_0|=512$) is the same for all the methods compared. We observe that GFlowNet consistently outperforms the baselines in terms of return over the initial set. We also observe that the mean pairwise L2-distance between the $\mathrm{top}\,\mhyphen k$ points at the end of the final round is $0.83 \pm 0.03$, $0.61 \pm 0.01$ and $0.51 \pm 0.02$ for GFlowNet, MCMC and PPO respectively. This demonstrates the ability of GFlowNet to capture the modes, even in the absence of the true oracle, as well as the importance of capturing this diversity in multi-round settings.

\textbf{Small Molecules} $\,$
For the molecule discovery task, we initialize an MPNN proxy to predict docking scores from AutoDock \citep{trott2010autodock}, with $|D_0| = 2000$ molecules. At the end of each round we generate $200$ molecules which are evaluated with AutoDock and used to update the proxy. Figure~\ref{fig:top-k-active} shows GFlowNet discovers molecules with significantly higher energies than the initial set $D_0$. It also consistently outperforms MARS as well as Random Acquisition. PPO training was unstable and diverged consistently so the numbers are not reported. The mean pairwise Tanimoto similarity in the initial set is 0.60. At the end of the final round, it is $0.54 \pm 0.04$ for GFlowNet and $0.64 \pm 0.03$ for MARS. This further demonstrates the ability of GFlowNet to generate diverse candidates, which ultimately helps improve the final performance on the task. Similar to the single step setting, we observe that JT-VAE+BO is only able to generate $10^3$ molecules with similar compute time, and thus performs poorly.

\vspace*{-2mm}
\section{Discussion \& Limitations}
\vspace{-1mm}
In this paper we have introduced a novel TD-like objective for learning a flow for each state and \textit{(state, action)} pair such that policies sampling actions proportional to these flows draw terminal states in proportion to their reward. This can be seen as an alternative approach to turn an energy function into a fast generative model, without the need for an iterative method like that needed with MCMC methods, and with the advantage that when training succeeds, the policy generates a great diversity of samples near the main modes of the target distribution without being slowed by issues of mixing between modes.

{\bf Limitations.} One downside of the proposed method is that, as for TD-based methods, the use of bootstrapping may cause optimization challenges~\citep{kumar2020implicit,bengio2020interference} and limit its performance.
In applications like drug discovery, sampling from the regions surrounding each mode is already an important advantage, but future work should investigate how to combine such a generative approach to local optimization in order to refine the generated samples and approach the local maxima of reward while keeping the batches of candidates diverse.

{\bf Negative Social Impact. }The authors do not foresee negative social impacts of this work specifically.  




\begin{ack}
This research was enabled in part by computational resources provided by Calcul Québec (\url{www.calculquebec.ca}) and Compute Canada (\url{www.computecanada.ca}). 
All authors are funded by their primary academic institution. We also acknowledge funding from Samsung Electronics Co., Ldt., CIFAR and IBM.

The authors are grateful to Andrei Nica for generating the molecule dataset, to Maria Kadukova for advice on molecular docking, to Harsh Satija for feedback on the paper, as well as to all the members of the Mila Molecule Discovery team for the many research discussions on the challenges we faced.

\end{ack}

\section*{Author Contributions}

EB and YB contributed to the original idea, and wrote most sections of the paper. YB wrote the proofs of Propositions 1-3, EB the proof of Proposition 4. EB wrote the code and ran experiments for sections 4.1 (hypergrid) and 4.2 (small molecules). MJ wrote the code and ran experiments for section 4.3 (multi-round) and wrote the corresponding results section of the paper. MK wrote the biochemical framework upon which the molecule experiments are built, assisted in debugging and running experiments for section 4.3, implemented mode-counting routines used in 4.2, and wrote the biochemical details of the paper.

MK, DP and YB provided supervision for the project. All authors contributed to proofreading and editing the paper.

\bibliography{main}
\bibliographystyle{plainnat}

\newpage
\appendix

\section{Appendix}

All our ML code uses the PyTorch~\citep{pytorch2019} library. We reimplement RL and other baselines. We use the AutoDock Vina~\citep{trott2010autodock} library for binding energy estimation and RDKit~\citep{rdkit} for chemistry routines.

Running all the molecule experiments presented in this paper takes an estimated 26 GPU days. We use a cluster with NVidia V100 GPUs. The grid experiments take an estimated 8 CPU days (for a single-core).

All implementations are available at \url{https://github.com/bengioe/gflownet}.

\subsection{Proofs}
\label{app:proofs}

\setcounter{proposition}{0}
\begin{proposition}
Let $\amap:{\cal A}^* \mapsto \cal S$ associate each allowed action sequence $\vec{a} \in {\cal A}^*$ to a state \mbox{$s=\amap(\vec{a}) \in \cal S$}.
Let $\rsum: {\cal S} \mapsto \R^+$ associate each state $s \in \cal S$ to $\rsum(s)=\sum_{\vec{b} \in {\cal A}^*(s)} R(s+\vec{b})>0$, where ${\cal A}^*(s)$ is the set of allowed continuations from $s$ and $s+\vec{b}$ denotes the resulting state, i.e., $\rsum(s)$ is the sum of the rewards of all the states reachable from $s$.
Consider a policy $\pi$ which starts from the state corresponding to the empty string
$s_0=\amap(\emptyset)$ and chooses from state $s \in \cal S$ an allowable action $a \in {\cal A}(s)$
with probability $\pi(a | s) = \frac{\rsum(s+a)}{\sum_{b \in {\cal A}(s)} \rsum(s+b)}$. Denote $\pi(\vec{a}=(a_1,\ldots,a_N))=\prod_{i=1}^N \pi(a_i|\amap(a_1, \ldots, a_{i-1}))$ and $\pi(s)$ with $s \in \cal S$ the probability of visiting a state $s$ with this policy. The following then obtains:\\
(a) $\pi(s) = \sum_{\vec{a}_i:\amap(\vec{a}_i)=s} \pi(\vec{a}_i)$.\\
(b) If $\amap$ is bijective, then $\pi(s)=\frac{\rsum(s)}{\rsum(s_0)}$ and as a special case for terminal states $x$, $\pi(x)=\frac{R(x)}{\sum_{x \in \cal X} R(x)}$.\\
(c) If $\amap$ is non-injective and there are $n(x)$ distinct action sequences $\vec{a}_i$ s.t. $\amap(\vec{a}_i)=x$, then \mbox{$\pi(x)=\frac{n(x) R(x)}{\sum_{x' \in \cal X} n(x') R(x')}$}.
\end{proposition}
\begin{proof}
Since $s$ can be reached (from $s_0$) according to any of the
action sequences $\vec{a}_i$ such that $\amap(\vec{a}_i)=s$ and they are mutually exclusive and cover all the possible ways of reaching $s$, the probability that $\pi$ visits state $s$ is simply $\sum_{\vec{a}_i:\amap(\vec{a}_i)=s} \pi(\vec{a}_i)$, i.e., we obtain (a).\\
If $\amap$ is bijective, it means that there is only one such action sequence $\vec{a}=(a_1,\ldots,a_N)$ landing in state $s$, and the set of action sequences and states forms a tree rooted at $s_0$.
hence by (a) we get that $\pi(s) = \pi(\vec{a})$.
First note that because $\rsum(s)=\sum_{\vec{b} \in {\cal A}^*(s)} R(s+\vec{b})$, i.e., $\rsum(s)$ is the sum of the terminal rewards
for all the leaves rooted at $s$, we have that $\rsum(s)=\sum_{b \in {\cal A}(s)} V(s+b)$.
Let us now prove by induction that $\pi(s) = \frac{\rsum(s)}{\rsum(s_0)}$.
It is true for $s=s_0$ since $\pi(s_0=1)$ (i.e., every trajectory includes $s_0$). Assuming it is true for $s'=\amap(a_1,\ldots,a_{N-1})$,
consider $s=\amap(a_1,\ldots,a_N)$:
$$
\pi(s)=\pi(a_N|s')\pi(s')=\frac{\rsum(s)}{\sum_{b \in {\cal A}(s')} \rsum(s'+b)} \frac{\rsum(s')}{\rsum(s_0)}.
$$
Using our above result that $\rsum(s)=\sum_{b \in {\cal A}(s)} \rsum(s+b)$, we thus obtain a cancellation of $\rsum(s')$ with $\sum_{b \in {\cal A}(s')} \rsum(s'+b)$ and obtain
\begin{align}
 \pi(s) = \frac{\rsum(s)}{\rsum(s_0)},
 \label{eq:pi-property}
\end{align}
 proving that the recursion holds. We already know from the definition of $\rsum$ that $\rsum(s_0)=\sum_{x \in \cal X} R(x)$, so for the special case of $x$ a terminal state, $\rsum(x)=R(x)$ and Eq.~\ref{eq:pi-property}
 becomes $\pi(x)=\frac{R(x)}{\sum_{x' \in \cal X} R(x')}$, which finishes to prove (b).\\
 On the other hand, if $\amap$ is non-injective, the set of paths forms a DAG, and not a tree. Let us transform the DAG into a tree by creating a new state-space (for the tree version) which is the action sequence itself. Note how the same original leaf node $x$ is now repeated $n(x)$ times in the tree (with leaves denoted by action sequences $\vec{a}_i$) if there are $n(x)$ action sequences leading to $x$ in the DAG. With the same definition of $\rsum$ and $\pi(a|s)$
 but in the tree, we obtain all the results from (b) (which are applicable because we have a tree),
 and in particular $\pi(\vec{a}_i)$ under the tree is proportional to $R(x')=R(x)$. Applying (a), we see that $\pi(x) \propto n(x) R(x)$, which proves (c).
\end{proof}

\setcounter{proposition}{2}
\begin{proposition}
Let trajectories $\tau$ used to train $\flowf_\theta$ 
be sampled from
an exploratory policy $P$ with the same support as 
the optimal $\pi$ defined in Eq.~\ref{eq:def-pi}
for a consistent flow $\flowf^* \in \mathcal{F}^*$. A flow is consistent if Eq.~\ref{eq:in-out-flow-q-eq} is respected. Also assume
that $\exists \theta: \flowf_{\theta} = \flowf^*$, i.e., we choose
a sufficiently rich family of predictors.
Let $\theta^* \in {\rm argmin}_\theta E_{P(\tau)}[ L_\theta(\tau) ]$
a minimizer of the expected training loss.
Let $L_\theta(\tau)$ have the property that
when flows are matched it achieves its lowest
possible value. First, it can be shown
that this property is satisfied for the loss
in Eq.~\ref{eq:flow-loss-log}.
Then
\begin{align}
    \flowf_{\theta^*} &= \flowf^*, \;\;\;{\rm and}\\
    L_{\theta^*}(\tau) &=0 \;\;\; \forall \tau \sim P(\theta),
\end{align}

i.e., a global optimum of the expected loss provides
the correct flows. 
If 
\begin{equation}
    \pi_{\theta^*}(a|s) = \frac{\flowf_{\theta^*}(s,a)}{\sum_{a' \in {\cal A}(s)} \flowf_{\theta^*}(s,a')}
\end{equation}
then we also have
\vspace*{-2mm}
\begin{align}
 \pi_{\theta^*}(x)=\frac{R(x)}{Z}.
\end{align}
\end{proposition}

\begin{proof}
 A per-trajectory loss of 0 can be achieved by choosing a $\theta$ such that $\flowf_{\theta}=\flowf^*$ (which we assumed was possible), since this makes the incoming flow equal the outgoing flow. Note that there always exists a solution $\flowf^*$ in the space of allow possible flow functions which satisfies the flow equations (incoming = outgoing) by construction of flow networks with only a constraint on the flow in the terminal nodes (leaves).
 Since having $L_\theta(\tau)$ equal to 0 for all $\tau \sim P(\theta)$ makes the expected loss 0, and this is the lowest achievable value (since $L_\theta(\tau) \geq 0 \;\;\forall \theta$), it means that such a $\theta$ is a global 
minimizer of the expected loss, and we can denote it $\theta^*$.
If we optimize $\flowf$ in function space, we can directly set to 0 the gradient of the loss with respect to $\flowf(s,a)$ separately, and find a solution. 

Since we have chosen $P$ with support large enough to include 
all the trajectories leading to a terminal state $R(x)>0$,
it means that $L_\theta(\tau)=0$ for all these trajectories
and that $\flowf_\theta=\flowf^*$ for all nodes on these trajectories.
We can then apply Proposition~\ref{prop:2} (since the flows
match everywhere and we have defined the policy correspondingly,
as per Eq.~\ref{eq:def-pi}). We then obtain the conclusion by
applying result (c) from Proposition~\ref{prop:2}.
\end{proof}

Note that in the general case, an infinite number of solutions exist. Consider the case where two trajectories are possible, say $s_0,a_1,s_A,a_2,s_T$ and $s_0,a_3,s_B,a_4,s_T$, and both lead to the same terminal state $s_T$ with reward $r$. Then a valid solution solves the constrained system of equations $F(s_A) + F(s_B) = r, F(s_A)>0, F(s_B)>0$, and we see that there is an infinite number of solutions described by one parameter $u$ where $F(s_A) = u, F(s_B)=r-u\; u\in[0,r]$.

\subsection{Action-value function equivalence}
\label{sec:RL-equivalence}

Here we show that the flow $\flowf(s,a)$ that the proposed method learns can correspond to a ``real'' action-value function $\hat{Q}^\mu(s,a)$ in an RL sense, for a policy $\mu$.

First note that this is in a way trivially true: in inverse RL \citep{ng2000algorithms} there typically exists an infinite number of solutions to defining $\hat R$ from a policy $\pi$ such that $\pi = \arg\max_{\pi_i} V^{\pi_i}(s; \hat R)\;\forall s$, where $V^{\pi_i}(s; \hat R)$ is the value function at $s$ for reward function $\R$.

More interesting is the case where $\flowf(s,a;R)$ obtained from computing the flow corresponding to $R$ is exactly equal to some $Q^\mu(s,a;\hat{R})$ modulo a multiplicative factor $f(s)$ . What are $\mu$ and $\hat R$?

In the bijective case a simple answer exists. 

\begin{proposition}
\label{prop:q-equivalence}
Let $\mu$ be the uniform policy such that $\mu(a|s) = 1/|\mathcal{A}(s)|$, let $f(x)=\prod_{t=0}^{n} |\mathcal{A}(s_t)|$ when $x\equiv(s_0, s_1, ..., s_n)$, and let $\hat R(x) = R(x) f(s_{n-1})$,  then $Q^\mu(s,a;\hat{R})=\flowf(s,a;R)f(s)$.
\end{proposition}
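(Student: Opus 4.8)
The plan is to exploit the fact that in the episodic, terminal‑reward‑only setting the action‑value function of \emph{any} policy is just the expected terminal reward: with undiscounted return and reward collected only at a sink, $Q^\mu(s,a;\hat R) = \mathbb{E}_\mu[\hat R(X)\mid s,a]$, where $X$ is the (random) terminal state reached after taking $a$ in $s$ and thereafter following $\mu$. Since $F$ is bijective the induced MDP is a tree, so every state $s$ has a unique path $(s_0,\ldots,s_j=s)$ from the root; this makes $f(s)=\prod_{t=0}^{j}|\mathcal{A}(s_t)|$ unambiguous and gives the one‑step recursion $f(T(s,a)) = f(s)\,|\mathcal{A}(T(s,a))|$. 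I would flag up front that bijectivity is the key hypothesis: it is what makes both $f(s)$ well‑defined and the flow identity below hold.

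First I would fix $(s,a)$, set $s'=T(s,a)$ with $s=s_j$, and write $Q^\mu(s,a;\hat R)$ as a sum over the terminal states $x$ in the subtree rooted at $s'$. For each such $x$ with trajectory $(s_0,\ldots,s_n=x)$, the tree structure gives a unique path from $s'$ to $x$, and the uniform policy assigns $P_\mu(x\mid s') = \prod_{t=j+1}^{n-1} 1/|\mathcal{A}(s_t)|$, one factor $1/|\mathcal{A}(s_t)|$ for each non‑terminal node $s_{j+1},\ldots,s_{n-1}$ traversed on the way down.

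Then I would substitute $\hat R(x) = R(x)\,f(s_{n-1}) = R(x)\prod_{t=0}^{n-1}|\mathcal{A}(s_t)|$ into the product $P_\mu(x\mid s')\,\hat R(x)$ and observe the telescoping cancellation: the factors $\prod_{t=j+1}^{n-1}|\mathcal{A}(s_t)|$ arising from $f(s_{n-1})$ cancel the denominators of $P_\mu(x\mid s')$, leaving $R(x)\prod_{t=0}^{j}|\mathcal{A}(s_t)| = R(x)\,f(s)$. Because this residual factor $f(s)$ does not depend on $x$, summing over all $x$ in the subtree rooted at $s'$ gives $Q^\mu(s,a;\hat R)=f(s)\sum_x R(x)=f(s)\,V(s')$. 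Finally I would invoke the tree‑case flow identity: since $s'$ has a unique parent, its entire in‑flow passes through the edge $(s,a)$, so $Q(s,a;R)=V(s')=\sum_{\vec b\in\mathcal{A}^*(s')}R(s'+\vec b)$ (the same $V(s')=\sum_b V(s'+b)$ relation used in the proof of Proposition~\ref{prop:1}). Combining the two expressions yields $Q^\mu(s,a;\hat R)=f(s)\,Q(s,a;R)$, as claimed.

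I expect the main obstacle to be purely the bookkeeping of the product indices and the terminal boundary — in particular ensuring that $f$ is evaluated at the parent $s_{n-1}$ rather than at $x$ (where $|\mathcal{A}(x)|=0$ would be degenerate) and that choice probabilities are taken only at non‑terminal nodes. Once the indexing is made consistent the cancellation is mechanical, and the only genuinely conceptual inputs are the bijectivity and the tree flow identity $Q(s,a;R)=V(s')$.
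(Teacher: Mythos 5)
Your proof is correct, but it takes a genuinely different route from the paper's. The paper argues by backward induction on the Bellman recursion $Q^\mu(s,a;\hat R)=\hat R(s')+\frac{1}{|\mathcal{A}(s')|}\sum_{a'}Q^\mu(s',a';\hat R)$: it establishes the base case at the leaves ($Q^\mu(s,a;\hat R)=\hat R(s')=R(s')f(s)$ and $Q(s,a;R)=R(s')$), then propagates upward using only the local flow-matching identity $Q(s,a;R)=\sum_{a'\in\mathcal{A}(s')}Q(s',a';R)$ together with $f(s')=f(s)|\mathcal{A}(s')|$. You instead expand $Q^\mu(s,a;\hat R)$ in closed form as the expected terminal reward $\sum_x P_\mu(x\mid s')\hat R(x)$, let the factors of $f(s_{n-1})$ telescope against the uniform-policy probabilities to get $f(s)V(s')$, and then invoke the tree identity $Q(s,a;R)=V(s')$ (unique parent, so the whole in-flow of $s'$ sits on the edge $(s,a)$). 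Both arguments lean on bijectivity in the same essential way, and your bookkeeping of the boundary (choice probabilities only at $s_{j+1},\dots,s_{n-1}$; $\hat R$ defined via the parent $s_{n-1}$) checks out, including the degenerate case where $s'$ is itself a leaf. What each buys: the paper's induction stays local and mirrors the TD/flow-matching structure of the rest of the paper, never needing to enumerate terminal states; your direct computation yields the stronger intermediate identity $Q^\mu(s,a;\hat R)=f(s)V(s')$ and makes the role of $f$ transparent as exactly the factor that undoes the uniform policy's branching probabilities along each root-to-leaf path.
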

\begin{proof}
 By definition of the action-value function in terms of the action-value at the next step and by definition of $\mu$:
 \begin{equation}
     Q^\mu(s,a;\hat R) = \hat R(s') + \frac{1}{|\mathcal{A}(s')|} \sum_{a'\in\mathcal{A}(s')} Q^\mu(s',a';\hat{R}) \label{eq:q-equiv}
 \end{equation}
where $s'=T(s,a)$, as the environment is deterministic and has a tree structure.
 
For some leaf $s'$, $Q^\mu(s,a;\hat R)=\hat R(s')=R(s') f(s)$. Again for some leaf $s'$, the flow is $\flowf(s,a;R)=R(s')$. Thus $Q^\mu(s,a;\hat R)=\flowf(s,a;R)f(s)$. Suppose \eqref{eq:q-equiv} is true, then by induction for a non-leaf $s'$:
 \begin{align}
     Q^\mu(s,a;\hat R) &= \hat R(s') + \frac{1}{|\mathcal{A}(s')|} \sum_{a'\in\mathcal{A}(s')} Q^\mu(s',a';\hat{R})\\
     Q^\mu(s,a;\hat R) &= 0 + \frac{1}{|\mathcal{A}(s')|} \sum_{a'\in\mathcal{A}(s')} \flowf(s',a';R)f(s')
 \end{align}
we know from Eq \ref{eq:in-out-flow-q-eq} that
\begin{equation}
    \flowf(s,a;R) = \sum_{a'\in\mathcal{A}(s')} \flowf(s',a';R)
\end{equation}
and since  $f(s') = f(s)|\mathcal{A}(s')|$, we have that:
\begin{align}
     Q^\mu(s,a;\hat R) &=  \frac{\flowf(s,a;R)f(s')}{|\mathcal{A}(s')|}\\
     &=  \frac{\flowf(s,a;R)f(s)|\mathcal{A}(s')|}{|\mathcal{A}(s')|}\\
     &= \flowf(s,a;R)f(s)
\end{align}
\end{proof}

Thus we have shown that the flow in a bijective case corresponds to the action-value of the uniform policy. This result suggests that the policy evaluation of the uniform policy learns something non-trivial in the tree MDP case. Perhaps such a quantity could be used in other interesting ways.

In the non-injective case, since an infinite number of valid flows exists, it's not clear that such a simple equivalence always exists.

As a particular case, consider the flow $\flowf$ which assigns exactly 0 flow to edges that would induce multiple paths to any node. In other words, consider the flow which induces a tree, i.e. a bijection between action sequences and states, by disallowing flow between edges not in that bijection. By Proposition \ref{prop:q-equivalence}, we can recover some valid $Q^\mu$.

Since there is at least one flow for which this equivalence exists, we conjecture that more general mappings between flows and action-value functions exist.

\textbf{Conjecture} $\,$ \textit{There exists $f$ a function of $n(s)$ the number of paths to $s$, $\mathcal{A}(s)$, and $n_p(s)=|\{(p,a)|T(p,a)=s\}|$ the number of parents of $s$, such that $f(s, n(s), n_p(s), \mathcal{A}(s)) Q^\mu(s,a;\hat R)=\flowf(s,a;R)$ and $\hat R(x) = R(x) f(x)$ for the uniform policy $\mu$ and for some valid flow $\flowf(s,a;R)$.}

\subsection{Molecule domain details}
\label{app:chemistry}

We allow the agent to choose from a library of 72 predefined blocks. We duplicate blocks from the point of view of the agent to allow attaching to different symmetry groups of a given block. This yields a total of 105 actions per stem; stems are atoms where new blocks can be attached to. We choose the blocks via the process suggested by \citet{Jin_2020} over the ZINC dataset \citep{sterling2015zinc}. We allow the agent to generate up to 8 blocks.

The 72 block SMILES are \texttt{Br}, \texttt{C}, \texttt{C\#N}, \texttt{C1=CCCCC1}, \texttt{C1=CNC=CC1}, \texttt{C1CC1}, \texttt{C1CCCC1}, \texttt{C1CCCCC1}, \texttt{C1CCNC1}, \texttt{C1CCNCC1}, \texttt{C1CCOC1}, \texttt{C1CCOCC1}, \texttt{C1CNCCN1}, \texttt{C1COCCN1}, \texttt{C1COCC[NH2+]1}, \texttt{C=C}, \texttt{C=C(C)C}, \texttt{C=CC}, \texttt{C=N}, \texttt{C=O}, \texttt{CC}, \texttt{CC(C)C}, \texttt{CC(C)O}, \texttt{CC(N)=O}, \texttt{CC=O}, \texttt{CCC}, \texttt{CCO}, \texttt{CN}, \texttt{CNC}, \texttt{CNC(C)=O}, \texttt{CNC=O}, \texttt{CO}, \texttt{CS}, \texttt{C[NH3+]}, \texttt{C[SH2+]}, \texttt{Cl}, \texttt{F}, \texttt{FC(F)F}, \texttt{I}, \texttt{N}, \texttt{N=CN}, \texttt{NC=O}, \texttt{N[SH](=O)=O}, \texttt{O}, \texttt{O=CNO}, \texttt{O=CO}, \texttt{O=C[O-]}, \texttt{O=PO}, \texttt{O=P[O-]}, \texttt{O=S=O}, \texttt{O=[NH+][O-]}, \texttt{O=[PH](O)O}, \texttt{O=[PH]([O-])O}, \texttt{O=[SH](=O)O}, \texttt{O=[SH](=O)[O-]}, \texttt{O=c1[nH]cnc2[nH]cnc12}, \texttt{O=c1[nH]cnc2c1NCCN2}, \texttt{O=c1cc[nH]c(=O)[nH]1}, \texttt{O=c1nc2[nH]c3ccccc3nc-2c(=O)[nH]1}, \texttt{O=c1nccc[nH]1}, \texttt{S}, \texttt{c1cc[nH+]cc1}, \texttt{c1cc[nH]c1}, \texttt{c1ccc2[nH]ccc2c1}, \texttt{c1ccc2ccccc2c1}, \texttt{c1ccccc1}, \texttt{c1ccncc1}, \texttt{c1ccsc1}, \texttt{c1cn[nH]c1}, \texttt{c1cncnc1}, \texttt{c1cscn1}, \texttt{c1ncc2nc[nH]c2n1}.

We illustrate these building blocks and their attachment points in Figure \ref{fig:blocks}.

\begin{figure}[h]
    \centering
    \includegraphics[width=\linewidth]{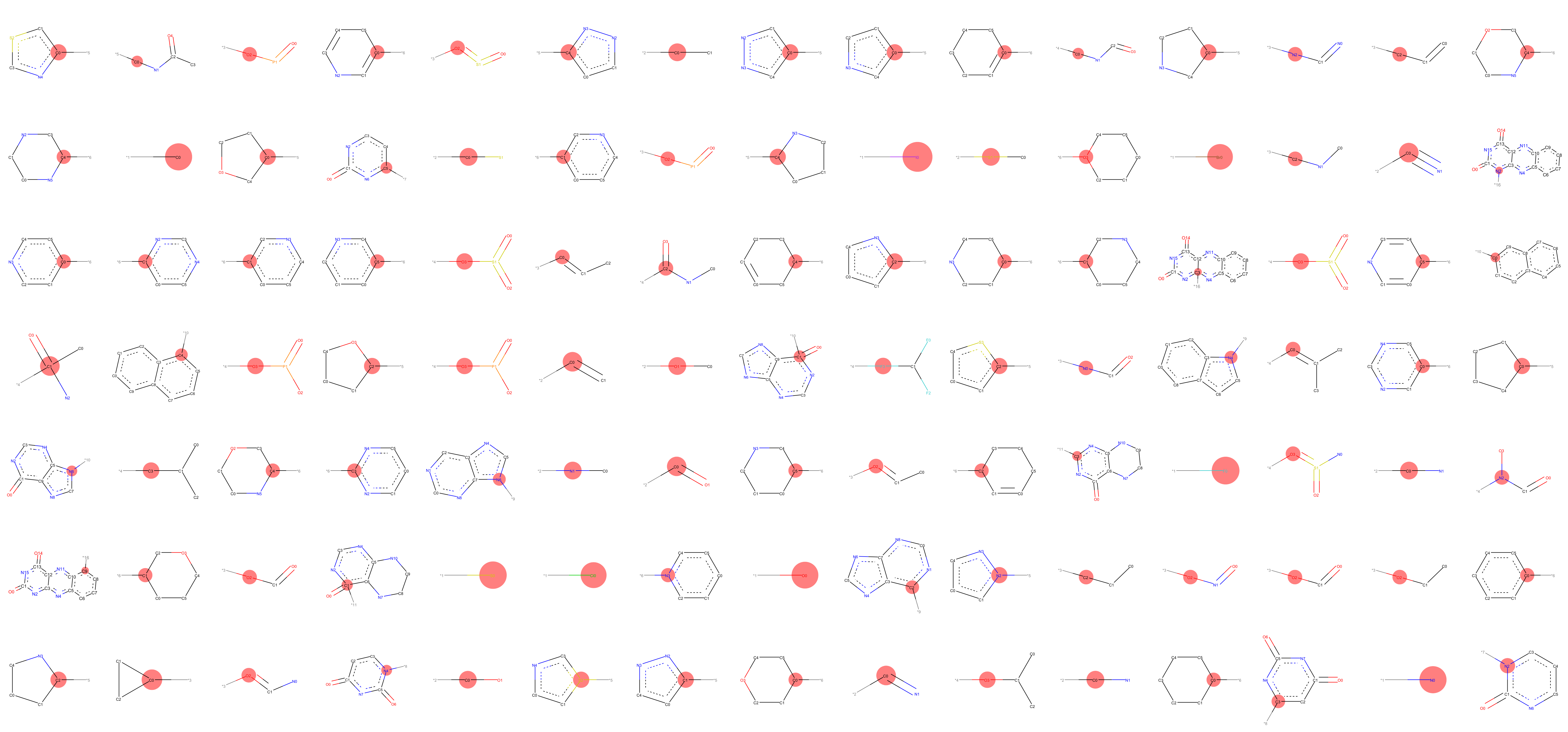}
    \caption{The list of building blocks used in molecule design. The stem, the atom which connects the block to the rest of the molecule, is highlighted.
    }
    \label{fig:blocks}
\end{figure}

We compute the reward based on a proxy's prediction. This proxy is trained on a dataset of 300k randomly generated molecules, whose binding affinity with a target protein has been computed with AutoDock \citep{trott2010autodock}. Since the binding affinity is an energy where lower is better, we takes its opposite and then renormalize it (subtract the mean, divide by the standard deviation) to obtain the reward.

We use the sEH protein and its 4JNC inhibitor. The soluble epoxide hydrolase, or sEH, is a well studied protein which plays a role in respiratory and heart disease, which makes it an interesting pharmacological target and benchmark for ML methods.

Note that we also experimented with other biologically relevant quantities, in particular logP (the n-octanol-water partition coefficient) and QED~\citep{bickerton2012quantifying}. Both were very easy to maximize with GFlowNet. For logP we quickly find molecules with a >20 logP, which at this point is biologically uninteresting (for reference, ibuprofen's logP is between 3.5 and 4). For QED, we also quickly find molecules with the maximum possible QED in our action space, which is 0.948 (in fact our top-1000 is >0.94 after 100k molecules seen). Since docking is a much harder oracle, we focused on it. Also note that we experimented with combining different scores multiplicatively (e.g. multiplying docking score by a renormalized QED and synthesizability), with some success. A more specific contribution in that regards is left to future work.

\subsection{Molecule domain implementation details}
\label{app:impl-details}

For the proxy of the oracle, from which the reward is defined, we use an MPNN~\citep{gilmer2017neural} that receives the atom graph as input. We compute the atom graph using RDKit. Each node in the graph has features including the one-hot vector of its atomic number, its hybridization type, its number of implicit hydrogens, and a binary indicator of it being an acceptor or a donor atom. The MPNN uses a GRU at each iteration as the graph convolution layer is applied iteratively for 12 steps, followed by a Set2Set operation to reduce the graph, followed by a 3-layer MLP. We use 64 hidden units in all of its parts, and LeakyReLU activations everywhere (except inside the GRU).

In the non-active-learning experiments, we train the proxy with a dataset of 300k molecules. To make the task interesting, we use 80\% of molecules obtained from random trajectories, and 20\% obtained from previous runs of RL agents. This extra 20\% contains slightly higher scoring molecules that are varied enough to allow for an interesting challenge. See Fig.~\ref{fig:histo-beta} for the reward distribution. The exact dataset is provided on our github repository for reproducibility. 

For the flow predictor $\flowf$ we also use an MPNN, but it receives the block graph as input. This graph is a tree by construction. Each node in the graph is a learned embedding (each of the 105 blocks has its own embedding and each type of bond has an edge embedding). We again use a GRU over the convolution layer applied 10 times. For each stem of the graph (which represents an atom or block where the agent can attach a new block) we pass its corresponding embedding (the output of the 10 steps of graph convolution + GRU) into a 3-layer MLP to produce 105 logits representing the probability of attaching each block to this stem for MARS and PPO, or representing the flow $\flowf(s,a)$ for GFlowNet; since each block can have multiple stems, this MLP also receives the underlying atom within the block to which the stem corresponds. For the stop action, we perform a global mean pooling followed by a 3-layer MLP that outputs 1 logit for each flow prediction. We use 256 hidden units everywhere as well as LeakyReLU activations.

For further stability we found that multiplying the loss for terminal transitions by a factor $\lambda_T>1$ helped. Intuitively, doing so prioritizes correct predictions at the endpoints of the flow, which can then propagate through the rest of the network/state space via our bootstrapping objective. This is similar to using reward prediction as an auxiliary task in RL (Jaderberg et al., 2017).

Here is a summary of the flow model hyperparameters:
\begin{center}
{\def\arraystretch{1.3}\tabcolsep=2pt
\begin{tabular}{|c|c|c|}
\hline
     Learning rate & $5\times10^{-4}$ & \\
     Minibatch size & 4 & \# of trajectories per SGD step \\
     Adam $\beta,\epsilon$ & (0.9, 0.999), $10^{-8}$ & \\
     \# hidden \& \# embed & 256 &\\
     \# convolution steps & 10 &\\
     Loss $\epsilon$ & $2.5\times 10^{-5}$ & $\epsilon$ in \eqref{eq:flow-loss-log}\\
     Reward $T$ & 8 &\\
     Reward $\beta$ & 10 & $\hat R(x) = (R(x)/T)^\beta$\\
     Random action probability & 0.05 & exploratory factor\\
     $\lambda_T$ & 10 & leaf loss coefficient\\
     $R_{min}$ & 0.01 & {\parbox[t]{4cm}{\small $R$ is clipped below $R_{min}$, i.e. \\ $\hat R_{min}=(R_{min}/T)^\beta$}}\\
\hline
\end{tabular}%
}
    
\end{center}
For MARS we use a learning rate of $2.5\times 10^{-4}$ and for PPO, $1\times 10^{-4}$. For PPO we use an entropy regularization coefficient of $10^{-6}$ and we set the reward $\beta$ to 4 (higher did not help). For MARS we use the same algorithmic hyperparameters as those found in \citet{xie2021mars}. For JT-VAE, we use the code provided by \citet{Jin_2020} as-is, only replacing the reward signal with ours.

\subsection{Multi-Round Experiments}
\label{app:multi-round}
Algorithm~\ref{algo:multi_round} defines the procedure to train the policy $\pi_\theta$ and used in inner loop of the multi-round experiments in the hyper-grid and molecule domains. The effect of diverse generation becomes apparent in the multi-round setting. Since the proxy itself is trained based on the input samples proposed by the generative models (and scored by the oracle, e.g., using docking), if the generative model is not exploratory enough, the reward (defined by the proxy) would only give useful learning signals around the discovered modes. The oracle outcomes $O(x)$ are scaled to be positive, and a hyper-parameter $\beta$ (a kind of inverse temperature) can be used to make the modes of the reward function more or less peaked.

\begin{algorithm}[H]
\label{algo:multi_round}
\SetAlgoLined
\kwInput{Initial dataset $D_0=\{x_i, y_i\}, i=1,\dots,k$\,; $K$ for $TopK$ evaluation; number of rounds (outer loop iterations) $N$;
inverse temperature $\beta$}
\KwResult{A set $TopK(D_N)$ of high valued $x$}
\kwInit{\\
Proxy $M$\; 
Generative policy $\pi_\theta$\; 
Oracle $O$\;
$i=1$\;}

 \While{$i<=N$}{
  Fit $M$ on dataset $D_{i-1}$\;
  Train $\pi_\theta$ with unnormalized probability function $r(x) = M(x)^\beta$ as target reward\;
  Sample query batch $B = \{x_1, \dots, x_b\}$ with $x_i \sim \pi_\theta$\;
  Evaluate batch $B$ with $O$, $\hat{D_i}=\{(x_1, O(x_1)), \dots, (x_b, O(x_b))\}$\;
  Update dataset $D_i=\hat{D_i} \cup D_{i-1}$\;
  $i=i+1$\;
 }
 \caption{Multi-Round Active Learning}
\end{algorithm}

\subsubsection{Hyper-grid}
We use the Gaussian Process implementation from \texttt{botorch}\footnote{\href{http://botorch.org/}{http://botorch.org/}} for the proxy. The query batch size of samples generated after each round is $16$. The hyper-parameters for training the generative models are set to the best performing values from the single-round experiments. 

The initial dataset only contains 4 of the modes. GFlowNet discovered 10 of the modes within 5 rounds, while MCMC discovered 10 within 10 rounds, whereas PPO managed to discover only 8 modes by the end (with $R_0=10^{-1}$). 

\subsubsection{Molecules}
The initial set $D_0$ of $2000$ molecules is sampled randomly from the 300k dataset. At each round, for the MPNN proxy retraining, we use a fixed validation set for determining the stopping criterion. This validation set of $3000$ examples is also sampled randomly from the 300k dataset. We use fewer iterations when fitting the generative model, and the rest of the hyper-parameters are the same as in the single round setting. 

\begin{center}
\label{tab:mr_mol_app}
\begin{tabular}{|c||c|c|}
\hline
     & \multicolumn{2}{c|}{Reward after 1800 docking evaluations} \\
\hline
 method & top-10 & top-100 \\
\hline
GFlowNet & $ 8.83 \pm 0.15 $ & $ 7.76 \pm 0.11 $\\
MARS & $ 8.27 \pm 0.20 $ & $ 7.08 \pm 0.13 $ \\
\hline
\end{tabular}
\end{center}

\begin{figure}[h]
\centering
\begin{tabular}{c|cccc}
\begin{subfigure}{.18\textwidth}
  \centering
  \includegraphics[width=\linewidth]{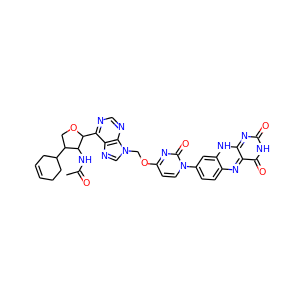}
  \caption{Reward=$8.26$}
  \label{fig:sub1}
\end{subfigure}%
&
\begin{subfigure}{.18\textwidth}
  \centering
  \includegraphics[width=\linewidth]{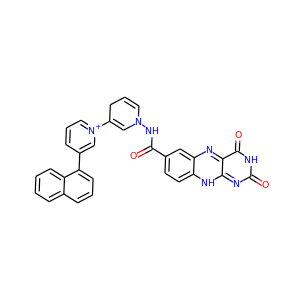}
  \caption{Reward=$9.18$}
  \label{fig:sub12}
\end{subfigure}%
&
\begin{subfigure}{.18\textwidth}
  \centering
  \includegraphics[width=\linewidth]{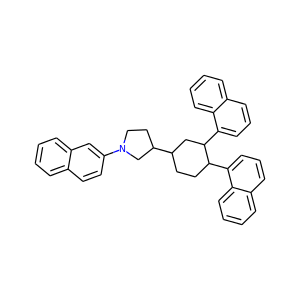}
  \caption{Reward=$9.08$}
  \label{fig:sub13}
\end{subfigure}%
&
\begin{subfigure}{.18\textwidth}
  \centering
  \includegraphics[width=\linewidth]{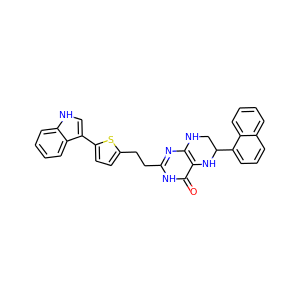}
  \caption{Reward=$8.73$}
  \label{fig:sub14}
\end{subfigure}%
&
\begin{subfigure}{.18\textwidth}
  \centering
  \includegraphics[width=\linewidth]{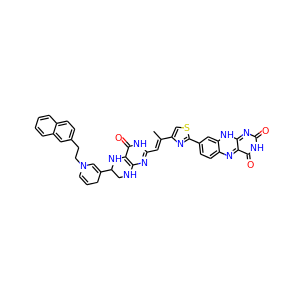}
  \caption{Reward=$8.64$}
  \label{fig:sub15}
\end{subfigure}%
\end{tabular}
\caption{(a) Highest reward molecule in $D_0$ in the multi-round molecule experiments. (b) Highest Reward molecule generated by GFlowNet. (c)-(e) Samples from the top-10 molecules generated by GFlowNet.}
\label{fig:test}
\end{figure}

\subsection{Hypergrid Experiments}
\label{app:toy}

Let's first look at what is learned by GFlowNet. What is the distribution of flows learned? First, in Figure \ref{fig:path_flownet_grid} (Left), we can observe that the distribution learned, $\pi_\theta(x)$, matches almost perfectly $p(x)\propto R(x)$ on a grid where $n=2$, $H=8$. In Figure \ref{fig:path_flownet_grid} (Middle) we plot the visit distribution on all paths that lead to mode $s=(6,6)$, starting at $s_0=(0,0)$. We see that it is fairly spread out, but not uniform: there seems to be some preference towards other corners, presumably due to early bias during learning as well as the position of the other modes. In Figure \ref{fig:path_flownet_grid} (Right) we plot what the uniform distribution on paths from $(0,0)$ to $(6,6)$ would look like for reference. Note that our loss does not enforce any kind of distribution on flows, and a uniform flow is not necessarily desirable (investigating this could be interesting future work, perhaps some distributions of flows have better generalization properties).

\begin{figure}[h]
    \centering
    \includegraphics[width=\linewidth]{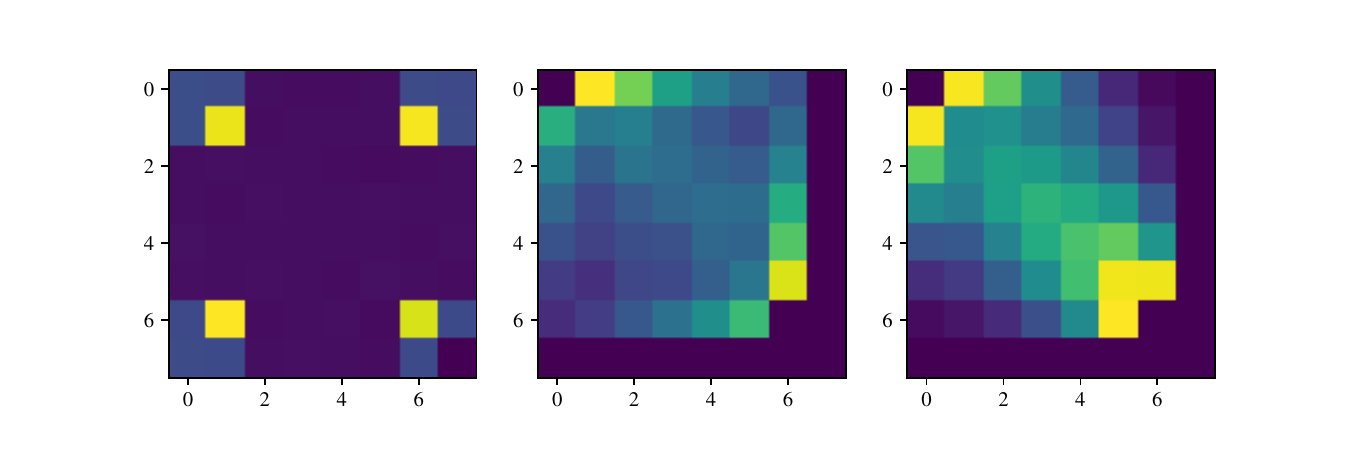}
    \caption{Grid with $n=2$, $H=8$. Left, the distribution $\pi_\theta(x)$ learned on the grid matches $p(x)$ almost perfectly; measured by sampling 30k points. Middle, the visit distribution on sampled paths leading to $(6,6)$. Right, the uniform distribution on all paths leading to $(6,6)$.
    }
    \label{fig:path_flownet_grid}
\end{figure}

Note that we also ran Soft Actor Critic~\citep{haarnoja2018soft} on this domain, but we were unable to find hyperparameters that pushed SAC to find all the modes for $n=4$, $H=8$; SAC would find at best 10 of the 16 modes even when strongly regularized (but not so much so that the policy trivially becomes the uniform policy). While we believe our implementation to be correct, we did not think it would be relevant to include these results in figures, as they are poor but not really surprising: as would be consistent with reward-maximization, SAC quickly finds a mode to latch onto, and concentrates all of its probability mass on that mode, which is the no-diversity failure mode of RL we are trying to avoid with GFlowNet.

Next let's look at the losses as a function of $R_0$, again in the $n=4$, $H=8$ setting. We separate the loss in two components, the leaf loss (loss for terminal transitions) and the inner flow loss (loss for non-terminals). In Figure \ref{fig:grid_losses} we see that as $R_0$ decreases, both inner flow and leaf losses get larger. This is reasonable for two reasons: first, for e.g. with $R_0=10^{-3}$, $\log 10^{-3}$ is a larger magnitude number which is harder for DNNs to accurately output, and second, the terminal states for which $\log 10^{-3}$ is the flow output are $100\times$ rarer than in the $R_0=10^{-1}$ case (because we are sampling states on-policy), thus a DNN is less inclined to correctly predict their value correctly. This incurs rare but large magnitude losses. Note that theses losses are nonetheless, small, in the order of $10^{-3}$ or less, and at this point the distribution is largely fit and the model is simply converging.

\begin{figure}[h]
    \centering
    \includegraphics[width=\linewidth]{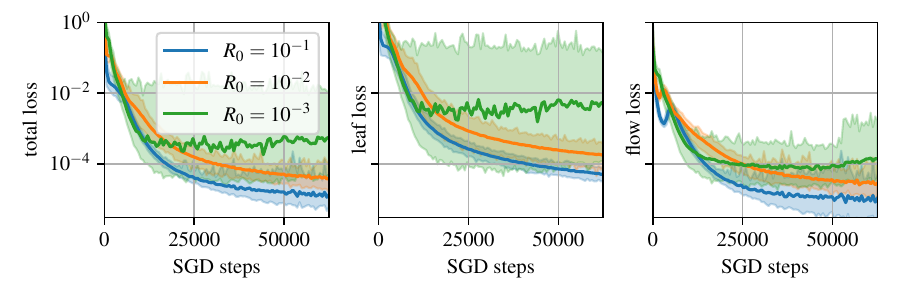}
    \caption{Losses during training for the ``corners'' reward function in the hypergrid, with $n=4$, $H=8$. Shaded regions are the min-max bounds.
    }
    \label{fig:grid_losses}
\end{figure}

\paragraph{GFlowNet as an offline off-policy method}
To demonstrate this feature of GFlowNet, we train it on a fixed dataset of trajectories and observe what the learned distribution is.  For this experiment we use $R(x)=0.01 + \prod_i (\cos(50x_i) + 1) f_\mathcal{N}(5x_i)$, $f_\mathcal{N}$ is the normal p.d.f., $n=2$ and $H=30$. We show results for two random datasets. First, in Figure \ref{fig:flow_dataset_rtraj} we show what is learned when the dataset is sampled from a uniform random policy, and second in Figure \ref{fig:flow_dataset_utraj} when the dataset is created by sampling points uniformly on the grid and walking backwards to the root to generate trajectories. The first setting should be much harder than the second, and indeed the learned distribution matches $p(x)$ much better when the dataset points are more uniform.
Note that in both cases many points are left out intentionally as a generalization test.

\begin{figure}[h]
    \centering
    \includegraphics[width=\linewidth]{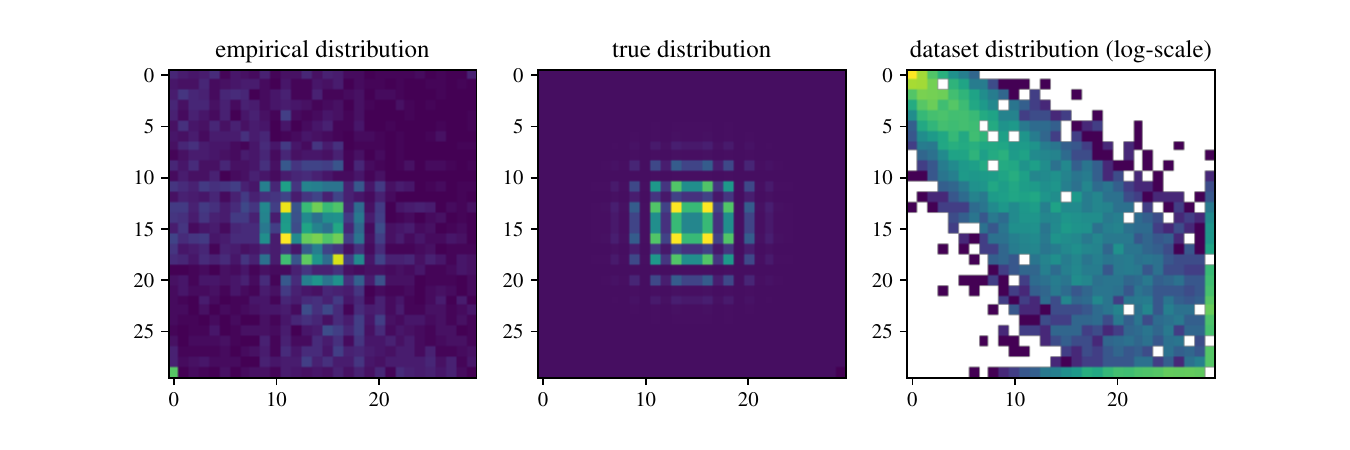}
    \caption{Grid with $n=2$, $H=30$. Left, the learned distribution $\pi_\theta(x)$. Middle, the true distribution. Right, the dataset distribution, here generated by executing a uniform random policy from $s_0$.
    }
    \label{fig:flow_dataset_rtraj}
\end{figure}
\begin{figure}[h]
    \centering
    \includegraphics[width=\linewidth]{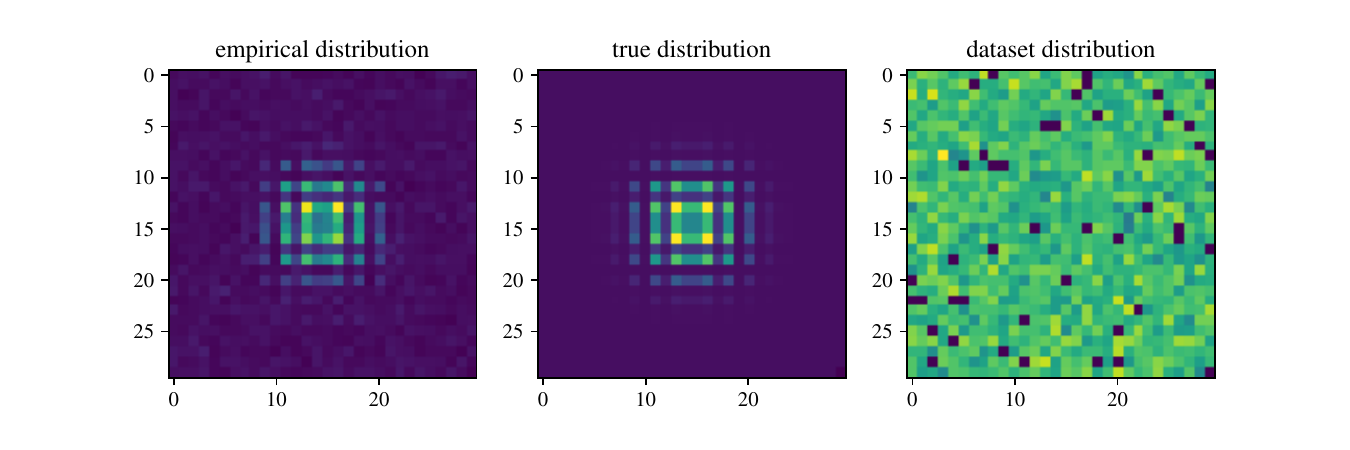}
    \caption{Grid with $n=2$, $H=30$. Left, the learned distribution $\pi_\theta(x)$. Middle, the true distribution. Right, the dataset distribution, here generated by sampling a point uniformly on the grid and sampling random parents until $s_0$ is reached, thus generating a training trajectory in reverse.
    }
    \label{fig:flow_dataset_utraj}
\end{figure}

These results suggest that GFlowNet can easily be applied offline and off-policy. Note that we did not do hyperparameter search on these two plots, these are purely illustrative and we believe it is likely that better generalization can be achieved by tweaking hyperparameters.

\subsection{GFlowNet results on the molecule domain}
\label{app:flownet-more-results}

Here we present additional results to give insights on what is learned by our method, GFlowNet.

Let's first examine the numerical results of Figure \ref{fig:top-k}:
\begin{center}
{\renewcommand{\arraystretch}{1.2}%
\begin{tabular}{|c||c|c|c|}
\hline
     & \multicolumn{3}{c|}{Reward at $10^5$ samples} \\
\hline
 method & top-10 & top-100 & top-1000 \\
\hline
GFlowNet & $ 8.36 \pm 0.01 $ & $ 8.21 \pm 0.03 $ & $ 7.98 \pm 0.04 $\\
MARS & $ 8.05 \pm 0.12 $ & $ 7.71 \pm 0.09 $ & $ 7.13 \pm 0.19 $\\
PPO & $ 8.06 \pm 0.26 $ & $ 7.87 \pm 0.29 $ & $ 7.52 \pm 0.26 $\\
\hline
& \multicolumn{3}{c|}{Reward at $10^6$ samples}\\
\hline
GFlowNet &  $ 8.45 \pm 0.03 $ & $ 8.34 \pm 0.02 $ & $ 8.17 \pm 0.02 $\\
MARS & $ 8.31 \pm 0.03 $ & $ 8.03 \pm 0.08 $ & $ 7.64 \pm 0.16 $\\
PPO &  $ 8.25 \pm 0.12 $ & $ 8.08 \pm 0.12 $ & $ 7.82 \pm 0.16 $\\
\hline
& \multicolumn{3}{c|}{Reward for $10^6$-equivalent compute}\\
\hline
JT-VAE + BO & $6.03$ & $ 5.86 $ & $5.31 $\\
\hline
\end{tabular}}
\end{center}
These are means and standard deviations computed over 3 runs. We see that GFlowNet produces significantly better molecules. It also produces much more diverse ones: GFlowNet has a mean pairwise Tanimoto similarity for its top-1000 molecules of $0.44\pm 0.01$, PPO, $0.62\pm0.03$, and MARS, $0.59\pm 0.02$ (mean and std over 3 runs). A random agent for this environment would yield an average pairwise similarity of $0.231$ (and very poor rewards). 

\begin{figure}[h]
    \centering
    \includegraphics[width=\linewidth]{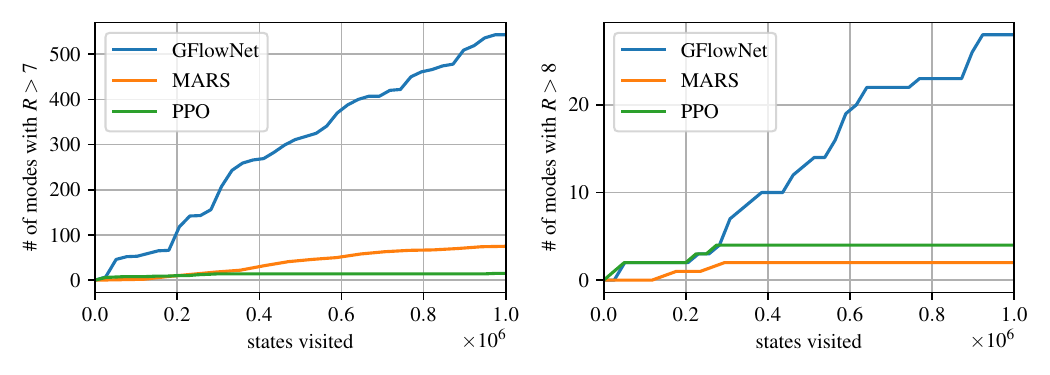}
    \caption{Number of Tanimoto-separated modes found above reward threshold $T$ as a function of the number of molecules seen. See main text. Left, $T=7$. Right, $T=8$. 
    }
    \label{fig:mol-modes}
\end{figure}

\begin{figure}[h]
    \centering
    \includegraphics[width=\linewidth]{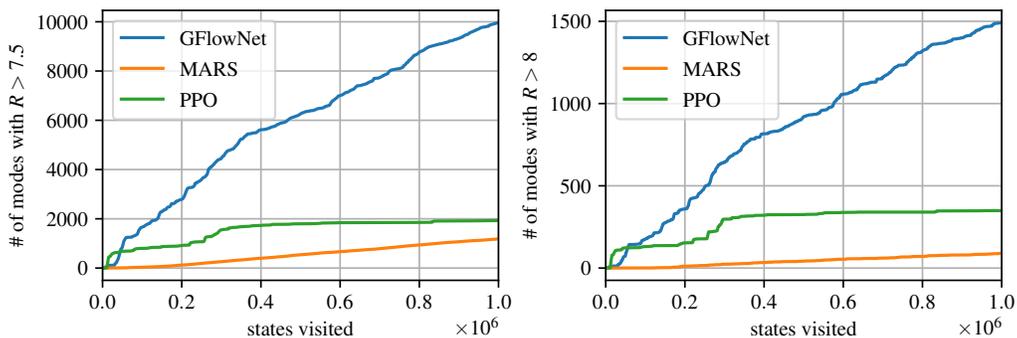}
    \caption{Number of diverse Bemis-Murcko scaffolds \citep{bemis1996properties} found above reward threshold $T$ as a function of the number of molecules seen. Left, $T=7.5$. Right, $T=8$. 
    }
    \label{fig:mol-modes-scaff-app}
\end{figure}

We also see that GFlowNet produces much more diverse molecules by approximately counting the number of modes found within the high-reward molecules. Here, we define "modes" as molecules with an energy above some threshold $T$, at most similar to each other in Tanimoto space at threshold $S$. In other words, we consider having found a new mode representative when a new molecule has a Tanimoto similarity smaller than $S$ to every previously found mode's representative molecule. We choose a Tanimoto similarity $S$ of 0.7 as a threshold, as it is commonly used in medicinal chemistry to find similar molecules, and a reward threshold of 7 or 8. We plot the results in Figure \ref{fig:mol-modes}. We see that for $R>7$, GFlowNet discovers many more modes than MARS or PPO, over 500, whereas MARS only discovers less than 100.

Another way to approximate the number of modes is to count the number of diverse Bemis-Murcko scaffolds\bibentry{bemis1996properties} present within molecules above a certain reward threshold. We plot these counts in Figure \ref{fig:mol-modes-scaff}, where we again see that GFlowNet finds a greater number of modes.

Next, let's try to understand what is learned by GFlowNet. In a large scale domain without access to $p(x)$, it is non-trivial to demonstrate that $\pi_\theta(x)$ matches the desired distribution $p(x)\propto R(x)$. This is due to the many-paths problem: to compute the true $p_\theta(x)$ we would need to sum the $p_\theta(\tau)$ of all the trajectories that lead to $x$, of which there can be an extremely large number. Instead, we show various measures that suggest that the learned distribution is consistent with the hypothesis the $\pi_\theta(x)$ matches $p(x)\propto R(x)^\beta$ well enough.

In Figure \ref{fig:leaf-scatter} we show how $\flowf_\theta$ partially learns to match $R(x)$. In particular we plot the inflow of leaves (i.e. for leaves $s'$ the $\sum_{s,a:T(s,a)=s'}\flowf(s,a)$) as versus the target score ($R(x)^\beta$). 

\begin{figure}[h]
    \centering
    \includegraphics[width=0.8\linewidth]{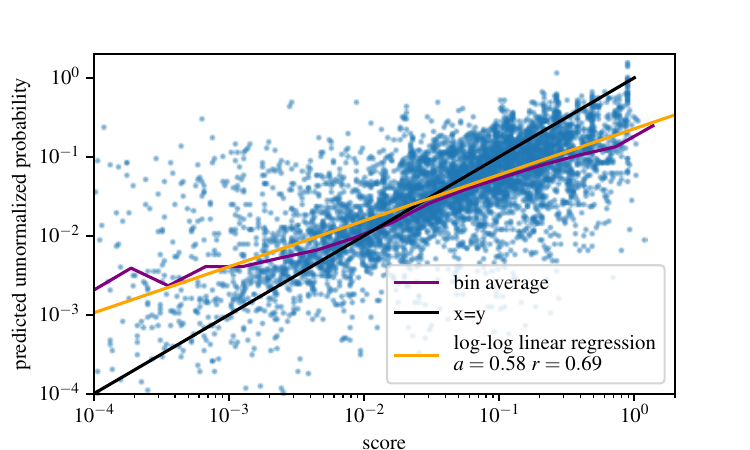}
    \caption{Scatter of the score ($R(x)^\beta$) vs the inflow of leaves (the predicted unnormalized probability). The two should match. We see that a log-log linear regression has a slope of $0.58$ and a $r$ of $0.69$. The slope being less than 1 suggests that GFlowNet tends to underestimate high rewards (this is plausible since high rewards are visited less often due to their rarity), but nonetheless reasonably fits its data. Here $\beta=10$. We plot here the last 5k molecules generated by a run.
    }
    \label{fig:leaf-scatter}
\end{figure}

Another way to view that the learned probabilities are self-consistent is that the histograms of $R(x)/Z$ and $\hat p_\theta(x)/Z$ match, where we use the predicted $Z= \sum_{a\in \mathcal{A}(s_0)} \flowf(s_0,a)$, and $\hat p_\theta(x)$ is the inflow of the leaf $x$ as above. We show this in Figure \ref{fig:leaf-histo}.
\begin{figure}[h]
    \centering
    \includegraphics[width=0.8\linewidth]{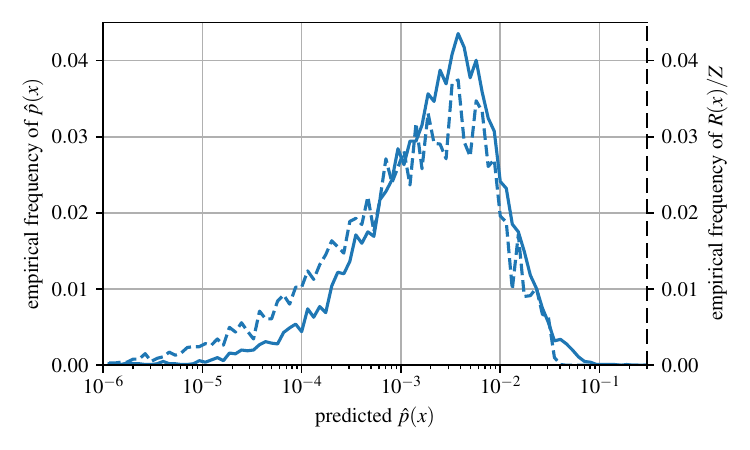}
    \caption{Histogram of the predicted density vs histogram of reward. The two should match. We compute these with the last 10k molecules generated by a run. This plot again suggests that the model is underfitted. It thinks the low-reward molecules are less likely than they actually are, or vice-versa that the low-reward molecules are better than they actually are. This is consistent with the previous plot showing a lower-than-1 slope.
    }
    \label{fig:leaf-histo}
\end{figure}

In terms of loss, it is interesting that our models behaves similarly to value prediction in deep RL, in the sense that the value loss never goes to 0. This is somewhat expected due to bootstrapping, and the size of the state space. Indeed, in our hypergrid experiments the loss does go to 0 as the model converges. We plot the loss separately for leaf transitions (where the inflow is trained to match the reward) and inner flow transitions (at visited states, where the inflow is trained to match the outflow) in Figure \ref{fig:losses}.

\begin{figure}[h]
    \centering
    \includegraphics[width=0.8\linewidth]{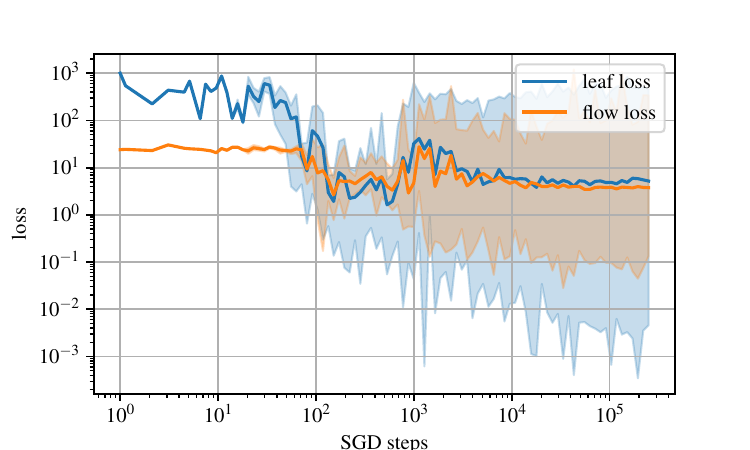}
    \caption{Loss as a function of training for a typical run of GFlowNet on the molecule domain. The shaded regions represent the min-max over the interval. We note several phases: In the initial phase the scale of the predictions are off and the leaf loss is very high. As prediction scales adjust we observe the second phase where the flow becomes consistent and we observe a dip in the loss. Then, as the model starts discovering more interesting samples, the loss goes up, and then down as it starts to correctly fit the flow over a large variety of samples. The lack of convergence is expected due to the massive state space; this is akin to value-based methods in deep RL on domains such as Atari.
    }
    \label{fig:losses}
\end{figure}

\end{document}